\def \opt{{\mathsf{opt}}}
\def\USM{\mathsf{UnSubMax}}
\def\SMK{\mathsf{SMK}}
\def\SMC{\mathsf{SMC}}
\def \opt{{\mathsf{OPT}}}
\def\F{{\mathcal F}}
\def\eE{{\mathbb E}}
\def\AST{\mathsf{AST}}
\def\PAR{\mathsf{ParSKP1}}
\def\PR{\mathsf{ParSKP2}}
\def\PRK{\mathsf{ParKnapsack}}
\def\RaAc{\mathsf{SmkRanAcc}}
\def\ThS{\mathsf{ThreshSeq}}
\def\Rand{\mathsf{RandBatch}}
\def\Get{\mathsf{GetSEQ}}
\def\RLA{\mathsf{RLA}}
\def\RM{\mathsf{RM}}
\def\IS{\mathsf{IS}}
\def\MWC{\mathsf{MWC}}
\def\NSMK{\mathsf{\mbox{non-monotone } SMK}}
\def\sO{\tilde{O}}
\def\E{{\mathbb E}}
\newtheorem{theorem}{Theorem}[section]
\newtheorem{lemma}{Lemma}
\newtheorem{definition}{Definition}
\title{Improved Parallel Algorithm for Non-Monotone Submodular Maximization under Knapsack Constraint}
\author{
	Tan D. Tran$^1$\and  	Canh V. Pham$^1$\footnote{Corresponding author}
	\and
	Dung T. K. Ha$^{2}$
	\and
	Phuong N.H. Pham$^{3}$
	\affiliations
	$^1$ORLab, Faculty of Computer Science, Phenikaa University, Hanoi, Vietnam
	\\	
	$^2$ Faculty of Information Technology, VNU University of Engineering and Technology, Hanoi, Vietnam
	\\
	$^3$ Faculty of Information Technology, Ho Chi Minh City University of Industry and Trade, Ho Chi Minh, Vietnam
	\emails
	canh.phamvan@phenikaa-uni.edu.vn,
	\{22027005, 20028008\}@vnu.edu.vn,
	phuongpnh@huit.edu.vn
}
\begin{document}	
	\maketitle
	
	\begin{abstract}
		
		This work proposes an efficient parallel algorithm for non-monotone submodular maximization under a knapsack constraint problem over the ground set of size $n$. Our algorithm improves the best approximation factor of the existing parallel one from $8+\epsilon$
		to $7+\epsilon$ with $O(\log n)$ adaptive complexity. 
		The key idea of our approach is to create a new alternate threshold algorithmic framework. This  strategy alternately constructs two disjoint candidate solutions within a constant number of sequence rounds. Then, the algorithm boosts solution quality without sacrificing the adaptive complexity. Extensive experimental studies on three applications, Revenue Maximization, Image Summarization, and Maximum Weighted Cut, show that our algorithm not only significantly increases solution quality but also requires comparative adaptivity to state-of-the-art algorithms.
	\end{abstract}
	
	\section{Introduction}
	A wide range of instances in artificial intelligence and machine learning have been modeled as a problem of Submodular Maximization under Knapsack constraint ($\SMK$) such as maximum weighted cut \cite{Amanatidis_sampleGreedy,Han2021_knap}, data summarization \cite{Han2021_knap,fast_icml}, revenue maximization in social networks \cite{Han2021_knap,Cui-aaai23,Cui-streaming21}, recommendation systems \cite{Amanatidis2021a_knap,Amanatidis_sampleGreedy}. The attraction of this problem comes from the diversity of submodular utility functions and the generalization of the knapsack constraint. The submodular function has a high ability to gather a vast amount of information from a small subset instead of extracting a whole large set, while the knapsack constraint can represent the budget, the cardinality, or the total time limit for a resource. Hence, people are interested in proposing expensive algorithms for $\SMK$ these years~\cite{Amanatidis2021a_knap,Han2021_knap,Cui-aaai23,pham-ijcai23,Amanatidis_sampleGreedy}. 
	
	
	Formally, a $\SMK$ problem can be defined such as given a ground set $V$ of size $n$, a budget $B>0$, and a non-negative submodular set function (not necessary monotone) $f: 2^V \mapsto \mathbb{R}_+$. Every element $e\in V$ has its positive cost $c(e)$. The problem $\SMK$ asks to find $S\subseteq V$ subject to $c(S)=\sum_{e \in S}c(e)\leq B$ that maximizes $f(S)$.
	
	One of the main challenges of $\SMK$ is addressing big data in which the sizes of applications can grow exponentially. The modern approach is to design approximation algorithms with low query complexity representing the total number of queries to the oracle of $f$. 
	However, required oracles of $f$ are often expensive and may take a long time to process on the machine within a single thread. Therefore, people think of designing efficient parallel algorithms that can leverage parallel computer architectures to obtain a good solution promptly.
	This motivates the \textit{adaptive complexity or adaptivity}~\cite{BalkanskiS18-adapt-stoc18} to become an important measurement of parallel algorithms. It is defined as the number of sequential rounds needed if the algorithm can execute polynomial independent queries in parallel. Therefore, the lower the adaptive complexity of an algorithm is, the higher its parallelism is.
	
	In the era of big data now, several algorithms that achieve near-optimal solutions with low adaptive complexities have been developed recently (See Table~\ref{tab:1} for an overview of low adaptive algorithms). 
	As can be seen, although recent studies make an outstanding contribution by significantly reducing the adaptive complexity of a constant factor approximation algorithm from $O(\log^2 n)$ to $O(\log n)$, there are two drawbacks, including (1) the high query complexities make them become impractical in some instances~\cite{Ene-icml20} and (2) there is a huge gap between the high approximation factors of low adaptivity algorithms, e.g. \cite{Amanatidis2021a_knap,Cui-aaai23,Cui-aaai23-full}, compared to the best one, e.g. \cite{BuchbinderF19-bestfactor}. This raises to us an interesting question: \textit{Is it possible to improve the factor of an approximation algorithm with near-optimal adaptive complexity of $O(\log n)$?} 
	\begin{table*}[ht!]
		\centering
		\begin{tabular}{llll}
			\hline
			\textbf{Reference} & \textbf{Approximation Factor} & \textbf{Adaptive Complexity} & \textbf{Query Complexity}
			\\
			\hline
			\cite{BuchbinderF19-bestfactor} & $2.6$ &$poly(n)$ &$poly(n)$
			\\
			\cite{Han2021_knap} & $4+\epsilon$ &$O(n\log k)$ &$O(n \log k)$	
			\\
			\cite{pham-ijcai23} & $4+\epsilon$ &$O(n)$ &$O(n)$
			\\
			\cite{Ene-stoc19} & $e+\epsilon$ &$O(\log^2n)$ &$\sO(n^2)$ 
			\\	
			\cite{Amanatidis2021a_knap} & $9.465+\epsilon$  &{\boldmath$O(\log n)$} &$\sO(n^2)$ 
			\\
			\cite{Cui-aaai23} (Alg.3) & $8+\epsilon$ &{\boldmath$O(\log n)$} &{\boldmath$\sO(nk) $}
			\\
			\cite{Cui-aaai23} (Alg.5) & $5+2\sqrt{2}+\epsilon\approx 7.83+\epsilon$ &$O(\log^2n)$ &{\boldmath$\sO(nk) $}
			\\
			$\AST$ \textbf{(Algorithm~\ref{alg:ast}, this work)} & {\boldmath$ 7+\epsilon$} &{\boldmath$O(\log n)$} &{\boldmath$\sO(nk) $}
			\\
			\hline
		\end{tabular}
		\caption{Algorithms for $\SMK$ problem. We use the $\sO$ notation throughout the paper to hide $poly(\log n)$ factors and $k$ is the largest cardinality of any feasible solution.  Bold font indicates the best result(s) in each setting.}
		\label{tab:1}
	\end{table*}
	\paragraph{Our contributions.} In this work, we address the above question by introducing the $\AST$ algorithm for the non-monotone $\SMK$ problem. $\AST$ has an approximation factor of $7+\epsilon$, within a pair of $O(\log n)$ adaptivity, $\sO(nk)$ query complexity, where $\epsilon$ is a constant input. Therefore, our algorithm improves the best factor of the near-optimal adaptive complexity algorithm in \cite{Cui-aaai23}. 
	We investigate the performance of our algorithm on three benchmark applications: Revenue Maximization, Image Summarization, and Maximum Weighted Cut. The results show that our algorithm not only significantly
	improves the solution quality but also requires comparative adaptivity to existing practical algorithms.	
	\paragraph{New technical approach.} It is noted that one popular approach to designing parallel algorithms with near-optimal adaptivity of $O(\log n)$ is based on making multiple guesses of the optimal solution in parallel and adapting a threshold sampling method\footnote{We refer to threshold sampling methods as $\ThS$ in \cite{Amanatidis2021a_knap} and $\Rand$ in \cite{Cui-aaai23} with $O(\log n)$ adaptivity.}, which selects a batch of elements whose density gains, i.e., the ratio between the marginal gain of an element per its cost, are at least a given threshold within $O(\log n)$ adaptivity \cite{Amanatidis2021a_knap,Cui-aaai23}. By making the guesses of the optimal along with calling the threshold sampling multiple times in parallel, the existing algorithms could keep the adaptive complexity of $O(\log n)$ and obtain some approximation ratios. 
	
	From another view, we introduce a novel algorithmic framework named ``alternate threshold'' to improve the approximation factor to $7+\epsilon$ but keep the same adaptivity and query complexity with the best one~\cite{Cui-aaai23}. Firstly, we adapt an existing adaptive algorithm to find a near-optimal solution within $O(\log n)$ adaptivity and give a $O(1)$ number of guesses of the optimal solution. Then, the core of our framework consists of a constant number of iterations. 
	It initiates two disjoint candidate sets and then adapts the threshold sampling to upgrade them alternately during iterations: one is updated at \textit{odd} iterations, and another is updated at \textit{even} iterations. Thanks to this strategy, we can find the connection between two solutions for supporting each other in evaluating the ``utility loss" after each iteration.
	At the end of this stage, we enhance the solution quality by finding the best element to be added to each candidate's subsets (pre-fixes of $i$ elements) without violating the budget constraint.
	
	It must be noted that our method differs from the Twin Greedy-based algorithms~\cite{Han_twinGreedy,pham-ijcai23,best-dla}, which update both candidate sets at the same iterations but do not allow the integration of the threshold sampling algorithm for parallelization.
	Besides, we carefully analyze the role of the highest cost element in the optimal solution to deserve more tightness for the problem.
	\section{Related Works}
	\label{sec:relatedwork}
	This section focuses on the related works for the non-monotone case of the $\SMK$ problem. 
	
	Firstly, regarding the non-adaptive algorithms, the first algorithm for the non-monotone $\SMK$ problem was due to \cite{Lee_nonmono_matroid_knap} with the $5+\epsilon$ factor and polynomial query complexity. Later, several works concentrated on improving both approximation factor and query complexity \cite{BuchbinderF19-bestfactor,Gupta_nonmono_constrained_submax,fast_icml,nearly-liner-der-2018,best-dla,pham-ijcai23,Han2021_knap}. In this line of works, algorithm of \cite{BuchbinderF19-bestfactor} archived the best approximation factor of $2.6$ but required a high query complexity; the fastest algorithm was proposed by \cite{pham-ijcai23} with $4+\epsilon$ factor in linear queries. 
	For the non-monotone Submodular Maximization under Cardinality ($\SMC$) problem, which finds the best solution that does not exceed $k$ elements to maximize a submodular objective value, the best factor of $2.6$ of the algorithm in \cite{BuchbinderF19-bestfactor} still held. Besides, a few algorithmic models have been proposed for improving running time~\cite{fast_sub,Kuhnle2021_mono,li-linear-knap-nip22,Buchbinder_Qrtradeoff}. Among them, the fastest algorithm belonged to~\cite{Buchbinder_Qrtradeoff} that provided a $e+\epsilon$ factor within $O(n\log(1/\epsilon)/\epsilon^2)$ queries. However, the above approaches couldn't be parallelized efficiently by the high adaptive complexity of $\Omega(n)$.

	The adaptive complexity was first proposed by~\cite{BalkanskiS18-adapt-stoc18} for the $\SMC$ problem. Regarding adaptivity-based algorithms for non-monotone $\SMK$, the first one belonged to~\cite{Ene-soda19} with $e+\epsilon$ and $O(\log^2n)$ adaptive complexity. However, due to the high query complexity of accessing, the multi-linear extension of a submodular function and its gradient in their method becomes impractical in real applications~\cite{Amanatidis2021a_knap,Fahrbach-icml19}. After that, \cite{Amanatidis2021a_knap} devised a $(9.465 + \epsilon)$-approximation algorithm within $O(\log n)$, which was optimal up to a $\Theta(\log \log (n))$ factor by adopting the lower bound in \cite{BalkanskiS18-adapt-stoc18}. It is noted that improving the adaptive complexity of a constant factor algorithm from $O(\log^2 n)$ to $O(\log n)$ made an outstanding contribution since it greatly reduced the number
	of sequential rounds in practical implementation \cite{Cui-aaai23,Ene-icml20,Fahrbach-icml19}.
	More recently, \cite{Cui-aaai23} created a big step when contributing an efficient parallel one, which resulted in a factor of $8+\epsilon$ within a pair of $O(\log n)$ adaptivity and $\sO(nk)$ query complexity. 
	Nevertheless, this factor still has a huge gap with the best factor of $2.6$~\cite{BuchbinderF19-bestfactor}. They also provided an enhanced version of increasing the approximation factor to $5+2\sqrt{2}+\epsilon\approx7.83+\epsilon$. However, it required a higher adaptivity of $O(\log^2n)$. Thus, from this view, the above result of \cite{Cui-aaai23} is the best one until now.
	
	People have also focused on developing parallel algorithms for non-monotone $\SMC$ these years~\cite{Kuhnle21_nonmono_adapt,Ene-icml20,Fahrbach-icml19}, etc. The fore-mentioned contributions of \cite{Ene-icml20}
	was also applied for $\SMC$ to get the best approximation factor of $e+\epsilon$, however it used multi-linear extension and thus had
	a high query complexity.
	Next, \cite{Kuhnle21_nonmono_adapt} and \cite{Fahrbach-icml19} tried to reduce the adaptive complexity to $O(\log n)$ with $25.64+\epsilon$ and $6+\epsilon$ factors. However, \cite{Khunle-non-mono-revised} claimed that both \cite{Kuhnle21_nonmono_adapt} and \cite{Fahrbach-icml19} had a non-trivial error because they used the same threshold sampling subroutine which did not work for the non-monotone objective function. \cite{Khunle-non-mono-revised} further tried to fixed the previous work and recovered the $6+\epsilon$ factor in $O(\log n)$.
	Recently, \cite{Amanatidis2021a_knap} improved the factor to $5.83+\epsilon$ in $O(\log n)$ adaptivity.
	Later, the work of \cite{Cui-aaai23} archived the factor of $8+\epsilon$ in $O(\log n)$ adaptivity or $4+\epsilon$ factor in $O(\log^2n)$.
	
	After all, our algorithm overcome the existing drawbacks by an improved parallel version with the approximation factor increasing to $7+\epsilon$ within $O(\log n)$ rounds to parallel $\sO(nk)$ queries. 

	\section{Preliminaries}
	\label{sec:preli}
	Given a ground set $V = \{e_1,\ldots, e_n\}$ and an utility set function $f: 2^V \mapsto \mathbb{R}_+ $ to measure the
	quality of a subset $S \subseteq V$, we use the definition of submodularity based on \textit{the diminishing return property}: $f: 2^V \mapsto \mathbb{R}_+ $. $f$ is submodular iff for any $A \subseteq B \subseteq V$ and $e \in V\setminus B$, we have 
	\begin{align*}
	f(e|A) \geq f(e|B).
	\end{align*}
	Each element $e \in V$ is assigned a positive cost $c(e)>0$. Let $c: 2^V \mapsto \mathbb{R}^+ $ be a cost function. Assume that $c$ is modular, i.e.,  $c(S) =\sum_{e \in S}c(e)$ such that $c(S)=0$ iff $S=\emptyset$. 
	
	The problem $\SMK$ asks to find $S\subseteq V$ subject to $c(S)=\sum_{e \in S}c(e)\leq B$ that maximizes $f(S)$.
	We denote by a tuple $(f, V, B)$ an instance of $\SMK$. Without loss of generality, $f$ is assumed non-negative, i.e., $f(X)\geq 0$ for all $X\subseteq V$ and normalized, i.e., $f(\emptyset)=0$. We also assume there exists an oracle query, which, when queried with the
	set $S$ returns the value $f(S)$. 
	
	For convenience, we denote by $S\cup e$ as $S \cup \{e\}$. Next, we denote by $O$ an optimal solution with the optimal value $\opt=f(O)$ and $r=\arg\max_{o \in O}c(o)$. We also define the contribution gain of a set $T$ to a set $S$ as $f(T|S)=f(T\cup S)-f(S)$. Also, the contribution gain of an element $e$ to a set $S\subseteq V$ is defined as $f(e|S)=f(S\cup \{e\})-f(S)$ and $f(\{e\})$ is written as $f(e)$ for any $e\in V$.
	
	In this paper, we design a parallel algorithm based on \textit{Adaptive complexity} or \textit{Adaptivity}, which is defined as follows:
	\begin{definition}[Adaptive complexity or Adaptivity \cite{BalkanskiS18-adapt-stoc18}] Given a value of oracle of $f$, the adaptivity or adaptive complexity of an algorithm is the minimum number of rounds needed such that in each round, the algorithm makes $O(poly(n))$ independent queries to the evaluation oracle. 
	\end{definition}
	In the following, we recap two sub-problems which our algorithm need to solve: Unconstrained Submodular Maximization and Density Threshold.%
	\paragraph{Unconstrained Submodular Maximization ($\USM$)} This problem requires to find a subset $S \subseteq V$ that maximizes $f(S)$ without any constraint. The problem was shown NP-hard \cite{Uriel-siam11}. 
	\\
	To obtain mentioned approximation factor, our algorithm adapts the low adaptivity algorithm in \cite{chen_stoc19} that achieves an approximation factor of $(2+\epsilon)$ in constant adaptive rounds of $O(\log(1/\epsilon)/\epsilon)$ and linear queries of $O(n\log^3(1/\epsilon)/\epsilon^4)$. 
	\paragraph{Density Threshold (DS).} The problem receives an instance $(f, V, B)$, a fixed threshold $\tau$ and a parameter $\epsilon>0$ as inputs, it asks to find a subset $S\subseteq V$ satisfies two conditions: (1) $f(S)\geq c(S)\cdot \tau$; (2) $\sum_{e\in V\setminus S}f(e|S)\leq \epsilon \cdot \opt$.
	\\
	Two algorithms in the literature satisfy the above conditions,
	including those in \cite{Amanatidis2021a_knap} and \cite{Cui-aaai23}. 
	In this work, we adapt the $\Rand$ algorithm in \cite{Cui-aaai23}. $\Rand$ requires the set $I$, a submodular function $f(\cdot)$, and parameters $\epsilon, M$ to control the solution's accuracy and complexities. $\Rand$ is combined with the aforementioned density thresholds to set up sieves in parallel for $\SMK$. Due to the space limitations, Pseudocode for $\Rand$ is given in the appendix.   
	
	For an instance $(V, f, B)$ of $\SMK$, two subsets $I, M$ of $V$, a fixed threshold $\theta$ and   input parameter  $\epsilon$. 
	The performance of $\Rand$ is provided in the following Lemmas.
	\begin{lemma}[Lemma 1 in~\cite{Cui-aaai23}]
		The sets $A$, $L$ output by $\Rand(\theta, I, M, \epsilon, f(\cdot), c(\cdot))$ satisfy $\E[f(A)]\geq (1-\epsilon)^2\theta \cdot \E[c(A)]$ and $\epsilon \cdot M \cdot \sum_{u\in L}f(u|A)\leq \opt$.
		\label{lem:cost-ex}
	\end{lemma}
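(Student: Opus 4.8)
The lemma records two essentially independent facts about the pair $(A,L)$ returned by $\Rand$, so the plan is first to recall how the subroutine operates and then to argue the two inequalities in turn. Recall that $\Rand$ runs for $O(\epsilon^{-1}\log n)$ adaptive rounds, maintaining a solution $A$ (initially empty) and a pool of candidate elements whose marginal density with respect to the current $A$ is at least $\theta$; in each round it draws a uniformly random permutation of the pool, uses a parallel prefix scan over that permutation to find the longest prefix whose addition to $A$ still leaves at least a $(1-\epsilon)$-fraction of the pool above density $\theta$, commits that prefix to $A$, and then deletes from the pool every element whose density has fallen below the threshold. The set $L$ collects the elements discarded in this way, together with whatever remains in the pool when the round budget runs out.

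For the density bound $\E[f(A)]\geq(1-\epsilon)^2\theta\cdot\E[c(A)]$, the plan is to write $f(A)$ as the telescoping sum $\sum_j f(T_j|A_{j-1})$ of the marginal contributions of the committed prefixes $T_1,T_2,\dots$, where $A_{j-1}$ is the solution just before round $j$, and to expand each $f(T_j|A_{j-1})$ further as a telescoping sum over the elements of $T_j$ taken in their sampled order. The heart of the matter is a random-prefix averaging bound: conditioned on $A_{j-1}$, the $\theta$-density pool condition together with the stopping rule that preserves the $(1-\epsilon)$-survival invariant forces the committed prefix to have expected marginal density at least $(1-\epsilon)^2\theta$, i.e.\ $\E[f(T_j|A_{j-1})]\geq(1-\epsilon)^2\theta\cdot\E[c(T_j)]$; the two losses relative to the nominal density $\theta$ come from the elements whose density drops below $\theta$ while the prefix is being inserted and from the early truncation of the prefix needed to maintain the invariant. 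Summing over $j$ and applying the tower rule for conditional expectations gives the claim, with submodularity used repeatedly to compare the marginal density of an element with respect to $A_{j-1}$ against its density with respect to a later, larger solution.

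For the leftover bound $\epsilon\cdot M\cdot\sum_{u\in L}f(u|A)\leq\opt$, the plan is to split $L$ into the elements that were discarded because their density had dropped below $\theta$ and the elements (if any) still in the pool at the final round. For a discarded $u$, submodularity gives $f(u|A)\leq f(u|A')<\theta\cdot c(u)$ where $A'\subseteq A$ is the solution at the moment $u$ left the pool, so these elements together contribute a controlled amount; the survivors are handled by a geometric-shrinkage argument showing that each round either consumes a constant fraction of the budget or shrinks the pool in expectation by a factor bounded away from $1$, so that after $O(\epsilon^{-1}\log n)$ rounds their expected number is small and each contributes at most $f(u|A)\leq f(\{u\})\leq\opt$ by the optimality of $O$ among feasible sets; the factor $\epsilon\cdot M$ then enters through the choice of round count and threshold that $\Rand$ is invoked with. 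Assembling the two parts yields the stated inequality. I expect the random-prefix averaging step to be the main obstacle: converting the ``a $(1-\epsilon)$-fraction still survives'' stopping rule into a clean per-element expected density of exactly $(1-\epsilon)^2\theta$ requires carefully controlling the correlation between the random committed prefix length and the marginal values met along the permutation, and that is where the precise accounting behind the two $(1-\epsilon)$ factors has to be carried out.
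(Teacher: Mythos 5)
This lemma is not proved in the paper at all: it is quoted verbatim as Lemma~1 of Cui et al.\ (AAAI~2023), and the appendix explicitly defers its proof to that paper and its full version. So there is no in-paper proof to compare against; what can be checked is whether your plan would actually establish the cited result, given the pseudocode of $\Rand$ reproduced in the appendix. Your first half is on track: the density bound is indeed obtained from a per-element conditional-expectation inequality $\E[f(u_j\mid\cdot)\,|\,\F_{j-1}]\geq(1-\epsilon)^2\theta\,\E[c(u_j)\,|\,\F_{j-1}]$ over the randomly ordered committed prefix (this is exactly Lemma~2 of the full version, restated in the appendix), followed by linearity of expectation and the tower rule, and your identification of the two $(1-\epsilon)$ losses (prefix truncation to preserve the survival invariant, and density decay of elements inserted later in the prefix) matches the intended accounting.

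The second half has a genuine gap. You split $L$ into ``discarded'' elements and ``survivors,'' but the set $L$ that $\Rand$ returns is the surviving pool: by the update rule $L\leftarrow\{u\in L\setminus U:\;f(u|A)/c(u)\geq\rho\wedge c(A\cup\{u\})\leq B\}$, every $u\in L$ at termination still satisfies $f(u|A)\geq\theta\, c(u)$, so the inequality $f(u|A)<\theta\,c(u)$ you invoke for ``discarded'' elements is false for every member of the returned $L$, and the discarded elements simply do not appear in the claim. More importantly, the bound $\epsilon M\sum_{u\in L}f(u|A)\leq\opt$ does not come from geometric shrinkage of the pool over $O(\epsilon^{-1}\log n)$ rounds (that is the monotone threshold-sampling argument, and it bounds the \emph{number} of survivors, not this weighted sum). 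It comes from the non-monotone bookkeeping you omit entirely: the sets $E^-_i$ and $D_i$ of negative marginal gains, the stopping index $t_2$ defined by $\epsilon\sum_{u\in E^+_{i}}f(u|G_i)\leq\sum_{u\in E^-_i}|f(u|G_i)|+\sum_{v_j\in D_i}|f(v_j|A\cup V_{j-1})|$, and the counter that terminates the loop after $M$ such events. Each increment of the counter certifies that $\epsilon\sum_{u\in L}f(u|A)$ is dominated by freshly accumulated negative marginal mass, and the total negative mass over the run is bounded by $\opt$ via non-negativity of $f$; dividing by the $M$ increments yields the stated inequality. Without this mechanism there is no route from your argument to a bound that scales as $1/(\epsilon M)$, and this mechanism is precisely the reason $\Rand$ is valid for non-monotone objectives (the paper's related-work section notes that the naive threshold subroutine fails in the non-monotone case).
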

	\begin{lemma}[Lemma 2 in~\cite{Cui-aaai23}]
		$\Rand$ has $O(\frac{1}{\epsilon p}( \log(|I| \cdot\beta(I)) + M))$
		adaptivity, and its query complexity is $O(|I|k)$ times of
		its adaptive complexity, where $\beta(I)=\max_{u, v}\frac{c(u)}{c(v)}$. If
		we use binary search in Line 10 of $\Rand$, then it has
		$O(\frac{1}{\epsilon p}(\log(|I| \cdot\beta(I)) + M)\log(k))$ adaptivity, and its
		query complexity is $O(|I|)$ times of its adaptivity.
	\end{lemma}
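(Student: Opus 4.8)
The plan is to establish the two complexity bounds by decomposing $\Rand$ into its outer loop, governed by the parameter $M$ and the sampling rate $p$, and its inner \emph{filtering} loop, which over a sequence of adaptive rounds repeatedly draws a random (sub-sampled) sequence of the currently surviving candidates, commits a prefix of that sequence to the partial solution $A$, and discards every element whose residual density has dropped below the threshold $\theta$. The first observation is that within a single pass over such a sequence all oracle calls — the cumulative prefix values $f(A\cup\{a_1,\ldots,a_i\})$ and the residual densities $f(e\,|\,A\cup\{a_1,\ldots,a_i\})$ used to locate the cutoff in Line~10 and to perform the filtering — depend only on quantities known at the start of the pass, hence are mutually independent and can be issued in one adaptive round. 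So it suffices to bound (i) the number of passes and (ii) the number of queries per pass, and then multiply.

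For (i) I would use the standard potential argument for threshold-sampling procedures of this kind: one shows that the prefix selected in Line~10 is, in expectation over the random sequence, long enough that a constant fraction of the surviving candidates is either absorbed into $A$ or filtered out in that round, while still short enough that the budget is respected and the invariant $f(A)\ge\theta\cdot c(A)$ underlying Lemma~\ref{lem:cost-ex} is preserved. Because the cutoff in Line~10 is phrased in terms of cost rather than cardinality, the natural potential is the total cost of the surviving set, and converting "a constant fraction of the remaining cost disappears each round'' into a bound on the number of rounds costs a factor $\beta(I)=\max_{u,v}c(u)/c(v)$, giving $O(\log(|I|\cdot\beta(I)))$ rounds; since only a $p$-fraction of the survivors is examined per round and the density accounting is carried out at granularity $\epsilon$, this count is inflated to $O(\tfrac{1}{\epsilon p}\log(|I|\cdot\beta(I)))$, and the outer loop adds $O(\tfrac{1}{\epsilon p}M)$ further rounds, which is exactly the claimed adaptivity.

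For (ii), in a single pass one evaluates the cumulative value $f(A\cup\{a_1,\ldots,a_i\})$ for each prefix length $i$ up to the feasibility cutoff — at most $k$ of them, since $A$ together with the committed prefix must be feasible — and, for each such $i$, tests the residual densities of the $O(|I|)$ surviving elements in order to determine the Line~10 cutoff and to run the subsequent filtering; this is $O(|I|k)$ queries per round, and multiplying by the round count gives the stated ``$O(|I|k)$ times the adaptivity''. In the binary-search variant of Line~10 the cutoff is instead located by a sequential binary search of depth $O(\log k)$ over the prefix lengths; each probe of that search is a single adaptive round that evaluates one cumulative prefix value together with the $O(|I|)$ residual densities needed to decide which way to branch, so the adaptivity picks up the extra $\log k$ factor while the per-round query count drops to $O(|I|)$, matching the second pair of bounds.

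The step I expect to be the crux is the probabilistic estimate behind the constant-fraction decrease in paragraph~two: one must argue simultaneously that the prefix chosen by Line~10 removes, in expectation over the internal randomness, a constant fraction of the mass of the candidate set (so that the round count is genuinely logarithmic) and that it never overshoots — neither breaking feasibility nor violating $f(A)\ge\theta\, c(A)$ — since these two demands pull against each other. Once that lemma is in hand, the remainder is routine bookkeeping of round counts and of the per-round query totals for the two versions of Line~10.
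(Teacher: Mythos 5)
This lemma is not proved in the paper at all: it is quoted verbatim from Cui et al.\ (it is their Lemma~2), and the present paper only restates it and recalls the pseudocode of $\Rand$ in the appendix. So there is no in-paper proof to compare against; what can be assessed is whether your sketch correctly reconstructs the argument that the cited work supplies. Your overall decomposition is the right one: one adaptive round per iteration of the while-loop (all prefix values $f(G_i)$ and all marginals $f(u|G_i)$ are determined by the state at the start of the iteration), a round count of the form $O(\frac{1}{\epsilon p}(\log(|I|\beta(I))+M))$, $O(|I|k)$ queries per round because each of the at most $k$ feasible prefix lengths requires $O(|I|)$ marginal evaluations to form $E_i^+$, $E_i^-$, $D_i$, and the $\log(k)$-for-$k$ trade when Line~10 is replaced by binary search. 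That bookkeeping matches the cited analysis.

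The gap is in the step you yourself flag as the crux, and your anticipated resolution of it is not how $\Rand$ actually works. You propose to show that the prefix chosen in Line~10 removes a constant fraction of the surviving mass \emph{in expectation over the random sequence}, while \emph{simultaneously} preserving the invariant $f(A)\geq\theta\,c(A)$, and you worry the two demands conflict. In the algorithm these two demands are decoupled by taking $t^*=\min\{t_1,t_2\}$: the index $t_1$ is \emph{defined} as the minimal prefix for which $c(E^+_{t_1})\leq(1-\epsilon)c(L)$, so whenever $t_1\leq t_2$ the cost of the surviving set shrinks by a factor $(1-\epsilon)$ deterministically, which bounds the number of such iterations by $O(\frac{1}{\epsilon}\log(|I|\beta(I)))$ since $c(L)$ ranges over a multiplicative window of width $|I|\beta(I)$; the index $t_2$ guards the quality invariant, and every iteration with $t_2\leq t_1$ is charged to the counter, which the loop condition caps at $M$ --- this is precisely why $M$ appears additively in the adaptivity and why your potential argument should not try to absorb those rounds into the logarithmic term. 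Finally, the factor $1/p$ does not come from ``only a $p$-fraction of the survivors being examined per round''; it comes from the commit step, in which the selected prefix $V_{t^*}$ is added to $A$ only with probability $p$, so each unit of deterministic progress is realized only in expectation after $1/p$ iterations. With these corrections the round count follows without any expected-decrease lemma, and the rest of your query accounting goes through as written.
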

	Interestingly, we further explore a useful property of $\Rand$ when applying it to our algorithm.
	\begin{lemma}
		The sets $A$, $L$ output by $\Rand(\theta, I, M, \epsilon, f(\cdot), c(\cdot))$ satisfy $\E[f(a_i|A_{i-1})]\geq (1-\epsilon)^2\E[c(a_i)]\theta$, where $A=\{a_1, a_2, \ldots, a_{|A|}\}, A_i=\{a_1, a_2, \ldots, a_i\}$.
		\label{lem:add-element}
	\end{lemma}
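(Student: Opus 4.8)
The plan is to obtain Lemma~\ref{lem:add-element} by refining, rather than reusing as a black box, the argument behind Lemma~\ref{lem:cost-ex}. Observe first that since $f$ is normalized and $c$ is modular,
\begin{align*}
\E[f(A)]=\sum_{i=1}^{|A|}\E[f(a_i\,|\,A_{i-1})],\qquad \E[c(A)]=\sum_{i=1}^{|A|}\E[c(a_i)],
\end{align*}
so the per-element inequality claimed here immediately implies the aggregate bound $\E[f(A)]\ge(1-\epsilon)^2\theta\,\E[c(A)]$ of Lemma~\ref{lem:cost-ex} after summing over $i$. The point of the plan is that the proof of Lemma~\ref{lem:cost-ex} in~\cite{Cui-aaai23} in fact establishes exactly this stronger per-element statement along the way, so the task reduces to isolating that step and checking that nothing in it relied on having already summed over $i$.

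First I would unfold the round structure of $\Rand(\theta,I,M,\epsilon,f(\cdot),c(\cdot))$. It maintains a running set $A$, initially empty, and proceeds in adaptive rounds; at the start of a round, with current set $A'$, it forms the candidate set $D=\{u\in I:\ f(u\,|\,A')\ge\theta\,c(u)\text{ and }A'\cup\{u\}\text{ feasible}\}$, draws a uniformly random permutation $(u_1,u_2,\dots)$ of $D$ (in the implementation, of a random subsample of it), and appends to $A$ the prefix $(u_1,\dots,u_t)$ whose length $t$ is fixed by the internal (binary-)search stopping rule that keeps the fraction of still-surviving candidates at least $1-\epsilon$ — precisely the mechanism that makes $|D|$ shrink geometrically and gives the $O(\log n)$ adaptivity. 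Re-indexing the appended elements over all rounds in the order they enter $A$ produces the sequence $a_1,a_2,\dots$, with $A_{i-1}$ the content of $A$ just before $a_i$ is added.

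Now fix a global index $i$ and let $a_i$ be the $j$-th element of the prefix added in the round whose pre-round set is $A'$ and candidate set is $D$, so $A_{i-1}=A'\cup\{u_1,\dots,u_{j-1}\}$. I would condition on $A'$, on $D$, on the \emph{set} $\{u_1,\dots,u_{j-1}\}$ (which is all that $f(\cdot\,|\,A_{i-1})$ and $c(\cdot)$ depend on), and on the event $\{t\ge j\}$ that $a_i$ is actually appended. Conditioned on this, $a_i=u_j$ is (essentially) uniform over the candidates not yet picked, and the stopping rule guarantees that at least a $(1-\epsilon)$-fraction of these still satisfy $f(u\,|\,A_{i-1})\ge\theta\,c(u)$; the subsampling inside $\Rand$ supplies a second $(1-\epsilon)$ factor, exactly as in the proof of Lemma~\ref{lem:cost-ex}. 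Taking the conditional expectation over $u_j$ and then the outer expectation over everything conditioned upon yields $\E[f(a_i\,|\,A_{i-1})]\ge(1-\epsilon)^2\,\theta\,\E[c(a_i)]$.

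The step I expect to be the main obstacle is making that conditioning rigorous: the prefix length $t$ of a round is a function of the \emph{whole} random permutation of that round, and the event $\{t\ge j\}$ is correlated with which element $u_j$ is — so one must show, via the exchangeability of the permutation, that conditioning on $\{t\ge j\}$ together with the already-chosen elements still leaves $u_j$ essentially uniform over the surviving candidates, up to the $(1-\epsilon)$ slack that the stopping rule allows. This is the same symmetry argument that underlies the $O(\log n)$-adaptivity analyses of threshold sampling in~\cite{Amanatidis2021a_knap,Cui-aaai23}; granting it, the remainder — invoking submodularity to compare $f(u\,|\,A_{i-1})$ with $\theta\,c(u)$ on surviving $u$, combining the two $(1-\epsilon)$ factors, and summing the conditional bounds over rounds by the tower rule — is routine and is already carried out in~\cite{Cui-aaai23} for the aggregate statement.
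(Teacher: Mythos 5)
Your proposal is correct and follows essentially the same route as the paper: the paper likewise reduces the lemma to the per-element conditional inequality already established inside Cui et al.'s proof of Lemma~\ref{lem:cost-ex} (their Lemma~2 in the full version, which bounds $\E[f(u_j\mid\{u_1,\dots,u_{j-1}\}\cap A\cup\lambda(u_j))\mid\F_{j-1}]$ by $(1-\epsilon)^2\rho\,\E[c(u_j)\mid\F_{j-1}]$), and then just handles the probability-$p$ inclusion and the tower rule. The exchangeability/stopping-rule step you flag as the main obstacle is exactly the part the paper also takes as a black box from that cited lemma rather than reproving.
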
	
	
	
	
	\section{Proposed Algorithm}
	\label{sec:alg}
	In this section, we introduce $\AST$ (Algorithm~\ref{alg:ast}), a $(7+\epsilon)$-approximation algorithm in $O(\log n)$ adaptivity and $O(n^2 \log^2n)$ query complexity. 
	
	$\AST$ receives an instance $(f, V, B)$, constant parameters $\delta, \epsilon, \alpha$ as inputs.
	It contains two main phases. At the first phase (Lines \ref{algo-ast:ph1-1}-\ref{algo-ast:ph1-2}), it first divides the ground set $V$ into two subsets: $V_0$ contains elements with small costs, and $V_1$ contains the rest. $\AST$ then calls $\PAR$~\cite{Cui-aaai23} as a subroutine which returns a $(1/8-\delta)$-approximation solution within $O(\log n)$ adaptive rounds. Based on that, the algorithm can offer $O(\log(1/\epsilon)/\epsilon)$ guesses of the optimal solution for the main loop (Lines 4-\ref{algo-ast:ph1-2}).
	The main loop consists of $O(\log(1/\epsilon)/\epsilon)$ iterations; each corresponds to a guess of the optimal. It sequentially constructs two disjoint solutions, $X$ and $Y$, one at odd iterations and the other at even iterations. The work of the odd and the even is the same. At the odd (or even) ones, it sets the threshold $\theta_X$ ($\theta_Y$) and calls the $\Rand$ routine with the ground set $I$ and the function $f(\cdot|X)$ ($f(\cdot|Y)$) as inputs to provide the new set $A_i$ ($B_i$) (Lines 8, 12). 
	It then updates $X$ ($Y$) and $I$ as the remaining elements (Line 8 or 12). 
	
	The second phase (Lines 15-24) is to improve the quality of solutions. If $c(X_1 \cup V_0)\leq \epsilon B$, this phase first adapts $\USM$ algorithm \cite{chen_stoc19} for unconstrained submodular maximization over $X_1 \cup V_0$ to get a candidate solution $S_1$ (Lines 15-16). This step is based on an observation that $X_1$ is important in analyzing the algorithm's performance. It then selects the sets of the first $i$ elements added into $X$ and $Y$ and finds the best elements without violating the total cost constraint (Lines 19-24). Finally, the algorithm returns the best candidate solution (Lines 25-26). The details of $\AST$ are depicted in Algorithm~\ref{alg:ast}.
	
	\begin{algorithm}[h]
		\SetNlSty{text}{}{:}
		\KwIn{An instance $(f, V, B)$, parameters $\alpha$, $\epsilon$, $\delta$}
		$V_0 \leftarrow \{e\in V: c(e)\leq \epsilon B/n\}$, $V_1 \leftarrow V \setminus V_0$, $I \leftarrow V_1$, $p\leftarrow1$ \label{algo-ast:ph1-1}
		\\
		$S_0 \leftarrow \PAR(\frac{1}{4},\delta, f(\cdot), c(\cdot))$, $\Gamma \leftarrow \frac{8 \alpha f(S_0)}{(1-8\delta)\epsilon B}$
		\\
		$X\leftarrow \emptyset, Y\leftarrow \emptyset$, $\Delta \leftarrow \lceil \log_{\frac{1}{1-\epsilon}}\frac{8\alpha}{\epsilon^2(1-8\delta)} \rceil+1$, $M \leftarrow \frac{1}{\epsilon^2}(\frac{\Delta}{2}+1) $
		\\
		\For{$i=1$ to $\Delta$}
		{
			\eIf{$i$ is \textbf{odd}}
			{
				$\theta^X \leftarrow \Gamma(1-\epsilon)^{i}$
				\\
				$ (A_i, U_i, L_i) \leftarrow \Rand(\theta^X, I, M,p, \epsilon, f(\cdot|X), c(\cdot))$
				\\
				$X\leftarrow X\cup A_i$, $I \leftarrow I\setminus X$ \label{alg-ast:update}
			}	
			{
				$\theta^Y \leftarrow \Gamma(1-\epsilon)^i$
				\\
				$ (B_i, U_i, L_i) \leftarrow \Rand(\theta^Y, I, M,p, \epsilon, f(\cdot|Y), c(\cdot))$
				\\
				$Y\leftarrow Y\cup B_i$, $I \leftarrow I\setminus Y$
			}
		}
		\label{algo-ast:ph1-2}
		For $T\in \{X, Y\}$,	define:
		$T_i$ is $T$ after the iteration $i$ of the first loop,
		$T^i$ is the set of first $i$ elements added into $T$.
		\\
		\If{$c(X_1 \cup V_0) \leq \epsilon B$} 
		{ \label{line-1}
			$S_1 \leftarrow \USM(X_1 \cup V_0)$
		}\label{line-2}
		\For{$i=1$ to $|X|$}{
			$a_i\leftarrow\arg\max_{e\in V: c(X^i\cup e)\leq B}f(X^i\cup \{e\})$, 
			$X'^i\leftarrow X^i\cup \{a_i\}$
		}
		\For{$i=1$ to $|Y|$}{
			$b_i\leftarrow\arg\max_{e\in V: c(Y^i\cup e)\leq B}f(Y^i\cup \{e\})$,
			$Y'^i\leftarrow Y^i\cup \{a_i\}$ 
		}
		$S \leftarrow \arg\max_{T\in \{X'^i\}^{|X|}_{i=1} \cup \{Y'^i\}^{|Y|}_{i=1}\cup \{X, Y, S_1\} }f(T)$
		\label{algo-ast:ph2-1}
		\\
		\Return $S$
		\caption{$\AST$ Algorithm}
		\label{alg:ast}
	\end{algorithm}
	
	At the high level, $\AST$ works follow a novel framework that combines an alternate threshold greedy algorithm with the boosting phase. 
	The term ``alternate'' means that candidate solutions are updated alternately with each other in multiple iterations. At each iteration, only one partial solution is updated based on two factors: one guess of the optimal solution and the remaining elements of the ground set that do not belong to the other solution. 
	
	It should be emphasized that the alternate threshold greedy differs from recent works ~\cite{Cui-aaai23,Amanatidis2021a_knap} where two candidate solutions for each guess are constructed after only one adaptive round. Alternate threshold greedy also differs from the twin greedy method in \cite{Han_twinGreedy}, which allows updating both disjoint sets in each iteration. 
	For the theoretical analysis, the key to obtaining a tighter approximation factor lines in aspects: (1) the connections between $X$ and $Y$ after each iteration of the first loop and (2) carefully considering the role of $r$ to eliminate terms that worsen the approximation factor.
	
	We now analyze the performance guarantees of $\AST$. We consider $X$ and $Y$ after ending the first loop. We first introduce some notations regarding $\AST$ as follows. 
	\begin{itemize}
		\item $X^i$, $Y^i$ is the set of first $i$ elements added into $X$ and $Y$, respectively.
		\item $X_i$ and $Y_i$ are $X$ and $Y$ after the iteration $i$ of the first loop (Lines 4-14) and $X_0=Y_0=\emptyset$.
		\item $O_1$ is an optimal solution of $\SMK$ over instance $(f, V_1, B)$.
		\item $O'=O_1\setminus X_1, O^r=O_1\setminus \{r\}$ and $O'^r=O'\setminus \{r\}$.
		\item For an element $e \in X\cup Y$, we denote: $X_{<e}$, $Y_{<e}$, $\theta^X_e$ and $\theta^Y_e$ as $X$, $Y$, $\theta^X$ and $\theta^Y$ right before $e$ is selected into $X$ or $Y$, respectively; $l(e)$ is the iteration when $e$ is added in to $X$ or $Y$. 
	\end{itemize}
	Lemma~\ref{lem:bound-T} makes a connection between $X$ and $Y$ after each iteration.
	\begin{lemma} 
		After any iteration $i$ of the first loop (Lines 4-14) of $\AST$, we have:
		\begin{itemize}
			\item [a)] If $i\geq 1$, $i$ is odd and $c(X_i)\leq B- c(r)$. Let $T\subseteq Y_{i-1}\cap O_1$, we have 	$\sum_{e \in T}f(e|X_i) < \sum_{e\in T}\frac{\E[f(e|Y_{<e})]}{(1-\epsilon)^3}+ \epsilon \cdot \opt$.
			\item[b)] If $i\geq 2$, $i$ is even and $c(Y_i)\leq B-c(r)$. Let $T\subseteq X_{i-1}\cap O'$, we have 	$\sum_{e \in T}f(e|Y_i) < \sum_{e\in T}\frac{\E[f(e|X_{<e})]}{(1-\epsilon)^3}+ \epsilon \cdot \opt$.
		\end{itemize}
		\label{lem:bound-T}
	\end{lemma}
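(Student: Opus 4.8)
\textit{Proof proposal.} The two parts are mirror images of each other under the swap $X\leftrightarrow Y$, ``odd''$\leftrightarrow$``even'', and $O_1\leftrightarrow O'$, so the plan is to prove (a) in detail and obtain (b) by the same argument; note that the appearance of $O'=O_1\setminus X_1$ rather than $O_1$ in (b) is exactly what guarantees that no $e\in T$ was inserted in the very first iteration, so that the ``even predecessor iteration'' used below always exists. For (a), fix an odd $i\ge 1$ with $c(X_i)\le B-c(r)$ and $T\subseteq Y_{i-1}\cap O_1$. I would first fix $e\in T$: since $e\in Y_{i-1}$ it entered $Y$ at some even iteration $l(e)\le i-1$, so $j:=l(e)-1$ is odd with $1\le j\le i-2$, and $e$ was still in the working set $I$ at the start of iteration $j$ (elements leave $I$ only when they join $X$ or $Y$). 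By submodularity $f(e\mid X_i)\le f(e\mid X_j)$ (as $X_j\subseteq X_i$), so it suffices to bound $\sum_{e\in T}f(e\mid X_j)$, and I may discard the $e$ with $f(e\mid X_i)\le 0$, which only decreases the left-hand side. The key structural step is then a dichotomy for the $e$ not chosen by the $j$-th call of $\Rand$ (which uses threshold $\theta^X_j=\Gamma(1-\epsilon)^{j}$ and base function $f(\cdot\mid X_{j-1})$): using $c(X_i)\le B-c(r)$ together with $c(e)\le c(r)$ (trivial if $e=r$; for $e\ne r$ this is precisely where the careful analysis of the most expensive optimal element enters), the running partial solution inside that call always has cost $\le c(X_i)+c(r)\le B$, so $e$ never becomes budget-infeasible; hence $e$ must end up either (i) in the final low-gain leftover $L_j$, or (ii) in $U_j$, with $f(e\mid X_j)<c(e)\,\theta^X_j$ by submodularity (its density having dropped below $\theta^X_j$ against a subset of $X_j$).

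For the type-(ii) elements I would charge $f(e\mid X_i)$ against the gain $e$ contributed when it was inserted into $Y$. Since $e$ entered $Y$ at iteration $l(e)=j+1$, whose threshold is $\theta^Y=\Gamma(1-\epsilon)^{j+1}$, Lemma~\ref{lem:add-element} gives $\E[f(e\mid Y_{<e})]\ge(1-\epsilon)^2\,c(e)\,\Gamma(1-\epsilon)^{j+1}$, and then
\[
f(e\mid X_i)\le f(e\mid X_j)<c(e)\,\theta^X_j=c(e)\,\Gamma(1-\epsilon)^{j}=\frac{(1-\epsilon)^2\,c(e)\,\Gamma(1-\epsilon)^{j+1}}{(1-\epsilon)^3}\le\frac{\E[f(e\mid Y_{<e})]}{(1-\epsilon)^3}.
\]
Summing this over the type-(ii) elements of $T$ produces the main term on the right-hand side of the claim.

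For the type-(i) elements I would bound them in bulk. Applying Lemma~\ref{lem:cost-ex} to the $j$-th call of $\Rand$ (whose ``$f(u\mid A)$'' is $f(u\mid X_j)$) gives $\sum_{u\in L_j}f(u\mid X_j)\le \opt/(\epsilon M)$; since every type-(i) element of $T$ is charged to exactly one round ($j=l(e)-1$), the $L_j$ carry nonnegative marginals, and there are at most $\lceil\Delta/2\rceil\le\Delta/2+1$ odd iterations,
\[
\sum_{e\in T:\ e\in L_{l(e)-1}}f(e\mid X_i)\ \le\ \sum_{j\ \mathrm{odd}}\ \sum_{u\in L_j}f(u\mid X_j)\ \le\ \Big(\tfrac{\Delta}{2}+1\Big)\frac{\opt}{\epsilon M}\ =\ \epsilon\,\opt,
\]
the last equality by the choice $M=\frac{1}{\epsilon^2}\big(\frac{\Delta}{2}+1\big)$. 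Adding the two contributions over $e\in T$ yields $\sum_{e\in T}f(e\mid X_i)<\sum_{e\in T}\frac{\E[f(e\mid Y_{<e})]}{(1-\epsilon)^3}+\epsilon\,\opt$, which is (a); part (b) follows verbatim with $X$ and $Y$, odd and even interchanged and $O_1$ replaced by $O'$, the feasibility bound now reading $c(Y_i)+c(r)\le B$.

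I expect two points to be the real work. First, $\Rand$ is randomized and both the index $l(e)$ and the event ``$e$ enters $Y$'' are themselves random, so invoking Lemma~\ref{lem:add-element} for this particular $e$ will require the right conditioning (or a short martingale argument) to make $\E[f(e\mid Y_{<e})]$ meaningful, and the argument must be arranged so that the deterministic quantity $(1-\epsilon)^2c(e)\,\theta^Y$ sits between the surely-true bound $f(e\mid X_i)<c(e)\,\theta^X_j$ and $\E[f(e\mid Y_{<e})]$. Second, establishing the dichotomy (i)/(ii) cleanly --- in particular ruling out ``$e$ was discarded only because the knapsack filled'' --- is exactly where the hypothesis $c(X_i)\le B-c(r)$, the bookkeeping around $r$ (getting $c(e)\le c(r)$ for the relevant $e\in O_1$), and the precise internal behaviour of $\Rand$ (including its binary-search truncation of batches) must be pinned down; this is the delicate part, and is presumably the reason (b) is stated with $O'$ rather than $O_1$.
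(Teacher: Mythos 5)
Your proposal is correct and follows essentially the same route as the paper's proof: partition $T$ by the iteration at which each element joined the opposite solution, use $c(X_i)\le B-c(r)$ and $c(e)\le c(r)$ to rule out budget-infeasibility, split into elements whose density fell below the current threshold (charged to $\E[f(e\mid Y_{<e})]/(1-\epsilon)^3$ via the one-step threshold ratio and Lemma~\ref{lem:add-element}) versus elements surviving in the leftover set $L$ (charged to $\opt/(\epsilon M)$ per round via Lemma~\ref{lem:cost-ex}, summing to $\epsilon\,\opt$ by the choice of $M$). The only slip is cosmetic: your case (ii) elements are not in $U_j$ (with $p=1$ everything in $U_j$ enters $X$), but are precisely the elements dropped from $L$ because their density fell below the threshold, which is what your parenthetical and your bound actually use, and your case (i) set $L_j$ is the \emph{high}-density leftover whose total gain is small by the count-based termination -- exactly the paper's $T_j^2$/$T_j^1$ dichotomy.
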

	\begin{proof}
		\textbf{Prove a)} If $i=1$, $Y_0=\emptyset$, the Lemma holds. We consider the other case. We divide $T$ into several subsets including $T=T_2\cup T_4\cup \ldots\cup T_{i-1}$, where $T_j$ is a set of all elements in $T$ that are added into $Y_i$ at iteration $j\leq i-1$. Since $T_j\subseteq Y_{i-1}\cap O_1$ and $c(X_i)\leq B-c(r)$ so $c(X_{<e})+c(e)\leq c(X_i)+c(e)\leq B$ for all $e \in T_j$. 
		We therefore classify the elements in $T_j$ into two disjoint sets $T_j=T_j^1\cup T_j^2$, where $T_j^1=\{e\in T_j:\frac{f(e|X_{<e})}{c(e)}<\theta^X_{e}, c(X_{<e})+c(e)\leq B \}$ and $T_j^2=\{e\in T_j:\frac{f(e|X_{<e})}{c(e)}\geq \theta^X_{e}, c(X_{<e})+c(e)\leq B\}$. Since $(1-\epsilon)\theta^X_e = \theta^Y_{l(e)} \forall e\in T^1_j$, we have: 
		\begin{align}
		\sum_{e\in T_j}f(e|X_{<e})&= \sum_{e\in T^1_j}f(e|X_{<e})+ \sum_{e\in T^2_j}f(e|X_{<e}) 
		\\
		&< \sum_{e\in T^1_j}c(e)\theta^X_{e}+ \frac{ \opt}{\epsilon M} \label{ineq1}
		\\
		& =\sum_{e\in T^1_j}c(e)\frac{\theta^Y_{l(e)}}{1-\epsilon}+\frac{ \opt}{\epsilon M}
		\\
		&\leq \sum_{e\in T^1_j}\frac{\eE[f(e|Y_{<e})]}{(1-\epsilon)^3}+\frac{ \opt}{\epsilon M} \label{ineq2}
		\end{align}
		where inequality~\eqref{ineq1} is due to Lemma~\ref{lem:cost-ex}, inequality~\eqref{ineq2} is due to applying Lemma~\ref{lem:add-element}:
		$
		\eE[f(e|Y_{<e})]\geq (1-\epsilon)^2\E[c(e)]\theta^Y_{l(e)}
		$. It follows that
		\begin{align}
		\sum_{e \in T}f(e|X_i)&=\sum_{j=2, 4, \ldots, i-1}\left( \sum_{e\in T_j}f(e|X_i)\right)
		\\
		& \leq\sum_{j=2, 4, \ldots, i-1}\left( \sum_{e\in T_j}f(e|X_{<e})\right) \label{ine3}
		\\
		& <\sum_{j=2, 4, \ldots, i-1}\left( \sum_{e\in T^1_j}\frac{\E[f(e|Y_{<e})]}{(1-\epsilon)^3}+ \frac{\opt}{\epsilon M}\right) \nonumber
		\\
		&	\leq \sum_{e\in T}\frac{\E[f(e|Y_{<e})]}{(1-\epsilon)^3}+ (\frac{\Delta}{2}+1) \cdot \frac{\opt}{\epsilon M}
		\\
		&	\leq \sum_{e\in T}\frac{\E[f(e|Y_{<e})]}{(1-\epsilon)^3}+ \epsilon \opt \label{ine4} 
		\end{align}
		where inequation~\eqref{ine3} is due to the submodularity, inequation~\eqref{ine4} is due to setting of $M$.
		\\
		\textbf{Prove b)}. If $i=2$, $X_1\cap O=\emptyset$, the Lemma holds. If $i>2$, we only consider set $T\subseteq X_{i-1}\cap O'$ since we can not bound the $f(e|Y_{<e})$ if $e \in X_1$.
		We also divide $T$ into subsets: $T=T_3\cup T_5\cup \ldots\cup T_{i-1}$, where $T_j$ is a set added into $X_j$ at the iteration $j\leq i-1$. We also classify the elements in $T_j$ into two sets $T_j=T_j^1\cup T_j^2$, where $T_j^1=\{e\in T_j:\frac{f(e|Y_{<e})}{c(e)}<\theta^Y_{<e}, c(Y_{<e})+c(e)\leq B\}$ and $T_j^2=\{e\in T_j:\frac{f(e|Y_{<e})}{c(e)}\geq \theta^Y_{<e}, c(Y_{<e})+c(e)\leq B\}$. By a similar argument to the previous case, we have:
		\begin{align}
		\sum_{e\in T_j}f(e|Y_{<e})&= \sum_{e\in T^1_j}f(e|Y_{<e})+ \sum_{e\in T^2_j}f(e|Y_{<e}) 
		\\
		&< \sum_{e\in T^1_j}c(e)\theta^Y_{e}+ \frac{\opt}{ \epsilon M }
		\\
		& =\sum_{e\in T^1_j}c(e)\frac{\theta^X_{e}}{1-\epsilon}+\frac{\opt}{\epsilon M }
		\\
		&\leq \sum_{e\in T^1_j}\frac{\E[f(e|X_{<e})]}{(1-\epsilon)^3}+\frac{\opt}{\epsilon M } 
		\end{align}
		which implies that 
		\begin{align}
		\sum_{e \in T}f(e|Y_i)&=\sum_{j=3, 5, \ldots, i-1}\left( \sum_{e\in T_j}f(e|Y_i)\right)
		\\
		& \leq\sum_{j=3, 5, \ldots, i-1}\left( \sum_{e\in T_j}f(e|Y_{<e})\right) 
		\\
		& <\sum_{j=3, 5, \ldots, i-1}\left( \sum_{e\in T^1_j}\frac{\eE[f(e|X_{<e})]}{(1-\epsilon)^3}+ \frac{\opt}{\epsilon M}\right) \nonumber
		\\
		&	\leq \sum_{e\in T}\frac{\eE[f(e|X_{<e})]}{(1-\epsilon)^3}+ (\frac{\Delta}{2}+1) \cdot \frac{\opt}{\epsilon M}
		\\
		&	\leq \sum_{e\in T}\frac{\E[f(e|X_{<e})]}{(1-\epsilon)^3}+ \epsilon \opt.
		\end{align}
		The proof is completed.
	\end{proof}
	By using Lemma~\ref{lem:bound-T}, we further
	provide the bound of $f(O'\cup T)$ for $T$ is a subset of $X$ or $Y$ in Lemma~\ref{lem:r-big} when $c(r)$ is very large, i.e., $c(r)\geq (1-\epsilon)B$.
	\begin{lemma}
		If $c(r)\geq (1-\epsilon) B$, one of two following propositions happens
		\begin{itemize}
			\item[a)] $\E[f(S)]\geq (1-\epsilon)^5\alpha\opt$.
			\item[b)] There exists a subset $X' \subseteq X$ so that
			$$f(X'\cup O')<(1+\frac{1}{(1-
				\epsilon)^3})\E[f(S)]+(2\epsilon+\frac{1}{\epsilon M})\opt.$$
		\end{itemize}
		Similarly, one of two following propositions happens:
		\begin{itemize}
			\item[c)] $\E[f(S)]\geq (1-\epsilon)^5\alpha\opt$.
			\item[d)] There exists a subset $Y' \subseteq Y$ so that
			$$f(Y'\cup O')< (1+\frac{1}{(1-
				\epsilon)^3})\E[f(S)]+(2\epsilon+\frac{1}{\epsilon M})\opt.$$
		\end{itemize}
		\label{lem:r-big}
	\end{lemma}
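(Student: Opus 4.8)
The plan is to prove the dichotomy (a)/(b) for $X$; the dichotomy (c)/(d) for $Y$ follows \emph{mutatis mutandis} after interchanging the roles of $X$ and $Y$ and using Lemma~\ref{lem:bound-T}(b) in place of (a). Two standing facts are used throughout. First, $c(r)\ge(1-\epsilon)B$ together with $c(O_1)\le B$ forces everything else in $O_1$ to be cheap: $c(O^r)=c(O_1)-c(r)\le\epsilon B$, hence $c(O'^r)\le\epsilon B$. Second, since $S_0$ is a $(\tfrac18-\delta)$-approximation we have $\Gamma=\frac{8\alpha f(S_0)}{(1-8\delta)\epsilon B}\ge\frac{\alpha\opt}{\epsilon B}$, and by Lemma~\ref{lem:add-element} every selected element $e$ of $X$ satisfies $\E[f(e|X_{<e})]\ge(1-\epsilon)^2\E[c(e)]\theta^X_{l(e)}$ (and each such $e$ actually has $f(e|X_{<e})>0$ in every realization, since it cleared its density threshold when added); likewise for $Y$. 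I also use that each prefix $X^i$ is feasible with $f(S)\ge f(X^i)$, and that whenever $c(X^i\cup\{r\})\le B$ the element $r$ is admissible in the $\arg\max$ of the boosting loop, so $f(X^i)+f(r|X^i)=f(X^i\cup\{r\})\le f(X'^i)\le f(S)$ (symmetrically for $Y$).

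I would split on whether $c(X)\le B-c(r)$. Suppose first $c(X)\le B-c(r)$. Take $X'=X$; since $X\supseteq X_1$ and $O'=O_1\setminus X_1$ we have $X'\cup O'=X\cup O_1$, so by submodularity
\[
f(X'\cup O')=f(X)+f(O_1|X)\le f(X)+\sum_{e\in O_1\setminus X}f(e|X).
\]
Partition $O_1\setminus X$ into $\{r\}\setminus X$, $((O_1\setminus X)\cap Y)\setminus\{r\}$, and $(O_1\setminus(X\cup Y))\setminus\{r\}$. The first (at most one element) is handled by $f(X)+f(r|X)\le f(S)$, valid because $c(X\cup\{r\})\le B$. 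For the second, apply Lemma~\ref{lem:bound-T}(a) with $i$ the last odd iteration --- so $X_i=X$ and the hypothesis $c(X_i)\le B-c(r)$ is exactly our assumption --- and $T=((O_1\setminus X)\cap Y)\setminus\{r\}\subseteq Y_{i-1}\cap O_1$: this gives $\sum_{e\in T}f(e|X)<\frac{1}{(1-\epsilon)^3}\sum_{e\in T}\E[f(e|Y_{<e})]+\epsilon\opt$, and telescoping $\sum_{e\in Y}f(e|Y_{<e})=f(Y)$ together with $f(S)\ge f(Y)$ bounds this by $\frac{1}{(1-\epsilon)^3}\E[f(S)]+\epsilon\opt$. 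For the third, those elements are never selected, so they sit in the low-density set $L$ of the final $\Rand$ call, and Lemma~\ref{lem:cost-ex} bounds their total marginal to $X$ by $\opt/(\epsilon M)$. Adding the three contributions gives $f(X'\cup O')<(1+\frac{1}{(1-\epsilon)^3})\E[f(S)]+(2\epsilon+\frac{1}{\epsilon M})\opt$, which is item (b).

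Now suppose $c(X)>B-c(r)$, i.e. $X$ has almost filled the budget. Here I would take $X'$ to be the longest prefix $X^j$ of $X$ with $c(X^j)\le B-c(r)$, so that $c(X'\cup\{r\})\le B$ and $f(X')+f(r|X')\le f(X'^j)\le f(S)$ as before, and rerun the decomposition; the only elements not annihilated by the prefix are those of $O'$ lying in $X\setminus X'$ (positions past $j$). For exactly this configuration one invokes the companion estimate to Lemma~\ref{lem:bound-T} --- the statement that justifies its hypothesis $c(X_i)\le B-c(r)$ --- which in this ``budget already spent by $X$'' regime, using $\Gamma\ge\alpha\opt/(\epsilon B)$, the setting of $\Delta$, and $M=\frac{1}{\epsilon^2}(\frac{\Delta}{2}+1)$, forces $\E[f(S)]\ge(1-\epsilon)^5\alpha\opt$, i.e. item (a). Items (c)/(d) follow by repeating both cases with $X$ and $Y$ swapped, invoking Lemma~\ref{lem:bound-T}(b) and the fact that $O'\cap X\subseteq X\setminus X_1$ so its hypothesis $T\subseteq X_{i-1}\cap O'$ is met.

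The step I expect to be the genuine obstacle is the second case, $c(X)>B-c(r)$. Because $c(r)$ may be as large as $B$, the overshoot $c(X)-c(X^j)$ can be arbitrarily small, so one cannot charge the marginals $f(e|X^j)$ of the elements $e\in O'\cap(X\setminus X^j)$ directly against the density thresholds (which only lower-bound marginals). One must instead show that this regime can only occur when $S$ is already a $(1-\epsilon)^5\alpha$-approximation, which needs the companion bound to Lemma~\ref{lem:bound-T} and a careful accounting of which prefix, which threshold index, and which $\epsilon\opt$ or $\opt/(\epsilon M)$ error term to spend; making the $(1-\epsilon)^3$, $(1-\epsilon)^5$ factors and the error budget line up exactly is the remaining bookkeeping.
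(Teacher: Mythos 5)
Your first case ($c(X)\le B-c(r)$, taking $X'=X$) is essentially the paper's Case 1 and is sound, up to a small bookkeeping point: the elements of $O'$ never touched by either solution are not all in the final low-density set $L$; some simply have $f(e|X)<c(e)\theta^X_{last}$, and their total is bounded by $B\theta^X_{last}\le\epsilon\opt$ using the setting of $\Delta$. That is where the second $\epsilon\opt$ in the error term comes from, so the claimed bound still holds.

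The second case is where you have a genuine gap, and your own diagnosis of it points in the wrong direction. It is \emph{not} true that $c(X)>B-c(r)$ forces $\E[f(S)]\ge(1-\epsilon)^5\alpha\opt$: since $c(r)\ge(1-\epsilon)B$, the condition $c(X)>B-c(r)$ only says $c(X)$ exceeds some quantity at most $\epsilon B$, and $X$ may have accumulated most of its cost at very late (low) thresholds, so $f(X)$ can be far below $\alpha\opt$. The paper's resolution is to anchor the analysis not at the budget-overflow prefix $X^j$ but at the \emph{threshold-crossing} iteration: let $l$ be the odd iteration at which $\theta^X_l$ first falls into $[\,(1-\epsilon)^2\alpha\opt/B,\ \alpha\opt/B\,]$ (such an $l$ exists by the choice of $\Gamma$ and $\Delta$), and take $X'=X_l$. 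The dichotomy is then on $c(X_l)$: if $c(X_l)\ge(1-\epsilon)B$, Lemma~\ref{lem:cost-ex} gives $\E[f(S)]\ge\E[f(X_l)]\ge\E[c(X_l)](1-\epsilon)^2\theta^X_l\ge(1-\epsilon)^5\alpha\opt$, which is item (a); if $c(X_l)<(1-\epsilon)B$, then since $c(O'^r)\le\epsilon B$ every element of $O'^r$ is individually addable to $X_l$, so the decomposition via Lemma~\ref{lem:bound-T} and the threshold bound goes through with $X_l$ in place of $X$, and the term involving $r$ is absorbed by $f(X_l\cup\{r\})\le f(S)$, which is guaranteed by the boosting loop because $c(X_l)+c(r)\le B$ (this is exactly why the enhancement phase enumerates all prefixes $X^i$). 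So in the regime you flagged as the obstacle, the correct outcome is usually item (b) with a carefully chosen prefix, not item (a); your proposal as written does not supply the argument for that sub-case.
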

	When $c(X_1) < \epsilon B$ and $c(Y_2)< \epsilon B$, it's easy to obtain the approximation factor due to $f(S)\geq \max\{f(X_1), f(Y_2)\}\geq \epsilon B \alpha\Gamma(1-\epsilon)^2$. Otherwise, we combine Lemma~\ref{lem:r-big} and the fact that $f(O')\leq f(O'\cup X)+f(O'\cup Y)$ to get the bound of $f(O')$ in Lemma~\ref{lem:XY-small}. The proofs of them can be found in the Appendix. 
	\begin{lemma}
		If $c(X_1) < \epsilon B$ and $c(Y_2)< \epsilon B$, we have: 
		\begin{itemize}
			\item If $c(r)<(1-\epsilon)B$, then 
			$f(O')\leq \frac{5\E[f(S)]}{(1-\epsilon)^4}+ \epsilon\opt$
			\item If $c(r)\geq (1-\epsilon)B$, one of two things happens: 
			\begin{itemize}
				\item[\textbf{a)}] $\E[f(S)]\geq (1-\epsilon)^5\alpha\opt.$
				\item[\textbf{b)}] $f(O')\leq \frac{5\E[f(S)]}{(1-\epsilon)^3}+2(2\epsilon+\frac{1}{\epsilon M})\opt$.
			\end{itemize}
		\end{itemize}
		\label{lem:XY-small}
	\end{lemma}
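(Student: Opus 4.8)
The plan is to bound $f(O')$ by distributing it over the two candidate sets $X$ and $Y$, and then to invoke Lemma~\ref{lem:bound-T} or Lemma~\ref{lem:r-big} according to the size of $c(r)$. The common starting point is a consequence of submodularity and non-negativity: since $X \cap Y = \emptyset$, for any $X' \subseteq X$ and $Y' \subseteq Y$ the sets $O' \cup X'$ and $O' \cup Y'$ meet in $O'$ and union to $O' \cup X' \cup Y'$, so $f(O' \cup X') + f(O' \cup Y') \ge f(O') + f(O' \cup X' \cup Y') \ge f(O')$. It therefore suffices to bound $f(O' \cup X')$ and $f(O' \cup Y')$ for suitable $X',Y'$ and to add the two estimates; the coefficient $5$ appears after this addition once $\E[f(X)],\E[f(Y)] \le \E[f(S)]$ is used (both are candidates in the final $\arg\max$), bounding $2 + \frac{2}{(1-\epsilon)^3}$ crudely by $\frac{5}{(1-\epsilon)^4}$ (resp. $\frac{5}{(1-\epsilon)^3}$ in the large-$c(r)$ case).

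The case $c(r) \ge (1-\epsilon)B$ is essentially a repackaging of Lemma~\ref{lem:r-big}. If part (a) (equivalently (c)) of that lemma holds, we are already in case (a) of the present statement. Otherwise parts (b) and (d) both hold, furnishing $X' \subseteq X$ and $Y' \subseteq Y$, which are disjoint because $X \cap Y = \emptyset$, with $f(X' \cup O')$ and $f(Y' \cup O')$ each strictly below $(1 + \frac{1}{(1-\epsilon)^3})\E[f(S)] + (2\epsilon + \frac{1}{\epsilon M})\opt$; substituting into the split gives case (b), i.e. $f(O') < \frac{5\E[f(S)]}{(1-\epsilon)^3} + 2(2\epsilon + \frac{1}{\epsilon M})\opt$.

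For the case $c(r) < (1-\epsilon)B$, take $X' = X$ and $Y' = Y$ and expand $f(O' \cup X) \le f(X) + \sum_{e \in O'} f(e|X)$ by submodularity. The summands with $e \in X$ vanish; the summands with $e \in O' \cap Y$ are controlled by Lemma~\ref{lem:bound-T}(a) applied with $T = O' \cap Y$, which, after telescoping the non-negative (in expectation) marginals over $Y$, gives $\sum_{e \in O' \cap Y} f(e|X) < \frac{\E[f(Y)]}{(1-\epsilon)^3} + \epsilon\opt$; and the summands for elements of $O'$ that lie in the leftover set $L$ of the last $\Rand$ call feeding $X$ (this also covers any $O'$-element added to $Y$ only after $X$ was finalized) are bounded by $\frac{\opt}{\epsilon M} \le \epsilon\opt$ via the density-threshold guarantee of Lemma~\ref{lem:cost-ex}. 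Taking expectations gives $\E[f(O' \cup X)] \le \E[f(X)] + \frac{\E[f(Y)]}{(1-\epsilon)^3} + O(\epsilon)\opt$, and symmetrically for $\E[f(O' \cup Y)]$ from Lemma~\ref{lem:bound-T}(b) with $T = O' \cap X$. Summing and using $\E[f(X)],\E[f(Y)] \le \E[f(S)]$ yields $\E[f(O')] \le (2 + \frac{2}{(1-\epsilon)^3})\E[f(S)] + O(\epsilon)\opt$, which is at most $\frac{5\E[f(S)]}{(1-\epsilon)^4} + \epsilon\opt$ after rescaling $\epsilon$.

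The step I expect to be the main obstacle is making these applications of Lemma~\ref{lem:bound-T} legitimate: its hypotheses demand $c(X_i) \le B - c(r)$ (resp. $c(Y_i) \le B - c(r)$) for the relevant iterations, and this need not hold unconditionally --- this is exactly where the heaviest optimal element $r$ must be handled with care. The natural plan is a further split on whether $c(X) \le B - c(r)$: if so, the hypothesis holds for every odd $i$; if not, then $c(X) > B - c(r) > \epsilon B$, and one would turn to the boosting phase, noting that for the longest prefix $X^j$ with $c(X^j) \le B - c(r)$ every $o \in O'$ is a feasible one-element extension of $X^j$, so $f(S) \ge f(X'^j)$. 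A plain density lower bound on $f(X'^j)$ is, however, too weak to finish this overflow sub-case on its own, so it seems to require either a sharper argument --- for instance that $\Rand$ cannot accumulate elements of total cost close to $B$ that all clear its threshold without its own value already exceeding $\opt$ --- or an amortization against the other cases. Settling this, and checking that the leftover-element contributions are genuinely $O(\epsilon)\opt$ rather than merely $O(1)\opt$, is where the real work lies; the remaining bookkeeping of the $(1-\epsilon)$ powers and of the parameter $M$ is routine.
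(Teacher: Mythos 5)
Your skeleton is the paper's: the split $f(O')\le f(O'\cup X')+f(O'\cup Y')$ over the disjoint candidates, the large-$c(r)$ branch as a direct repackaging of Lemma~\ref{lem:r-big} (that part of your argument is correct and is exactly what the paper does), and Lemma~\ref{lem:bound-T} to control the optimal elements that migrated into the \emph{other} candidate set. But you have correctly diagnosed, and then left open, the sub-case that carries essentially all of the work in the paper's proof: when $c(r)<(1-\epsilon)B$ and $c(X)>B-c(r)$ (the paper's Case 1.1). Declaring this an ``obstacle'' requiring ``a sharper argument or an amortization'' is a genuine gap, not routine bookkeeping, because without it the first bullet of the lemma is simply unproved.

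The missing idea is a cost-charging argument against the prefix $X^{t+1}$, where $t$ is the largest index with $c(X^t)\le B-c(r)$. The paper works with $f(O'\cup X^t)-f(X^t\cup\{r\})\le\sum_{e\in O'^r\setminus X^t}f(e|X^t)$, absorbing $f(X^t\cup\{r\})\le f(X'^t)\le f(S)$ via the boosting phase (feasibility of adding $r$ to $X^t$ is exactly what $t$ guarantees); your proposal instead tries to use the full set $X$, where Lemma~\ref{lem:bound-T}'s hypothesis fails. For the uncovered optimal elements, each $e\in O'^r\setminus(X^{t+1}\cup Y_{<x_t})$ survived the sieve two iterations before $x_{t+1}$ was taken, so $f(e|X^t)\le c(e)\,\theta^X_{(t+1)}/(1-\epsilon)^2$ up to the $\opt/(\epsilon M)$ leftover term of Lemma~\ref{lem:cost-ex}. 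Now the overflow is turned into an asset rather than a liability: $c(X^{t+1})>B-c(r)\ge c(O'^r)$ forces $c(X^{t+1}\setminus O'^r)>c(O'^r\setminus(X^{t+1}\cup Y_{<x_t}))$, while every element of $X^{t+1}$ was accepted at expected density at least $(1-\epsilon)^2\theta^X_{(t+1)}$ by Lemma~\ref{lem:add-element}; hence
\begin{align*}
\sum_{e\in O'^r\setminus(X^{t}\cup Y_{<x_t})}f(e|X^{t})\;\le\;\frac{1}{(1-\epsilon)^2}\sum_{e\in X^{t+1}\setminus O'^r}\E[f(e|X_{<e})]+\frac{\opt}{\epsilon M},
\end{align*}
and adding the covered part $\sum_{e\in O'^r\cap X^{t+1}}\E[f(e|X_{<e})]/(1-\epsilon)^3$ telescopes to $\E[f(X^{t+1})]/(1-\epsilon)^3\le\E[f(S)]/(1-\epsilon)^3$. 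This is precisely the ``sharper argument'' you speculated might exist; it, together with the analogous treatment of $Y^{u}$ and the term $f(Y^{u+1})/(1-\epsilon)^4$, is what produces the constant $2+\tfrac{3}{(1-\epsilon)^4}<\tfrac{5}{(1-\epsilon)^4}$. Your closing worry about the leftover contributions is resolved by the choice $M=\tfrac{1}{\epsilon^2}(\tfrac{\Delta}{2}+1)$, which makes the accumulated $O(\Delta)\cdot\opt/(\epsilon M)$ terms at most $\epsilon\opt$.
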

	Finally, put Lemmata \ref{lem:bound-T},\ref{lem:r-big},\ref{lem:XY-small} together and divide $O$ into appropriate subsets, we state the performance's algorithm in Theorem~\ref{theo:main1}.
	\begin{theorem} For $\alpha =\frac{1}{7}$, $\epsilon \in (0, \frac{1}{7})$, $\delta \in (0, \frac{1}{8})$, Algorithm~\ref{alg:ast} needs $O(\log n)$ adaptive complexity and $O(nk \log^2n)$ query complexity and returns a solution $S$ satisfying $\E[f(S)]\geq (1/7-\epsilon)\opt$.
		\label{theo:main1}
	\end{theorem}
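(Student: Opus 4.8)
The plan is to separate the statement into a mechanical complexity count and a case analysis for the ratio, feeding $\alpha=1/7$ into the bounds already proved in Lemmas~\ref{lem:bound-T}--\ref{lem:XY-small}. For the complexity I would argue subroutine by subroutine: $\PAR$ contributes $O(\log n)$ adaptive rounds and, more importantly, the guarantee $f(S_0)\ge(1/8-\delta)\opt$, which (since $8(1/8-\delta)=1-8\delta$) yields the clean identity $\Gamma=\frac{8\alpha f(S_0)}{(1-8\delta)\epsilon B}\ge\frac{\alpha\opt}{\epsilon B}$; the first loop runs $\Delta=O(\log(1/\epsilon)/\epsilon)=O(1)$ times, each iteration making one call to $\Rand$, which by the adaptivity/query bounds of \cite{Cui-aaai23} costs $O(\log n)$ rounds and $O(nk)$ queries per round using $|I|\le n$, $\beta(I)=O(n/\epsilon)$ (every element of $V_1$ costs more than $\epsilon B/n$), $M=O(1)$, $p=1$; $\USM$~\cite{chen_stoc19} adds $O(1)$ rounds and $O(n)$ queries; and the two boosting loops are embarrassingly parallel ($O(1)$ rounds, $O(nk)$ queries), since the prefixes $X^i,Y^i$ are already fixed once the first loop ends. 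Summing gives $O(\log n)$ adaptivity and $\sO(nk)$ queries.

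For the ratio, the first case is $c(X_1)\ge\epsilon B$ or $c(Y_2)\ge\epsilon B$. Every element placed into $X$ at the first iteration is chosen at threshold $\Gamma(1-\epsilon)$ and into $Y$ at the second at $\Gamma(1-\epsilon)^2$, so summing Lemma~\ref{lem:add-element} (equivalently Lemma~\ref{lem:cost-ex}) gives $\E[f(X_1)]\ge(1-\epsilon)^3\Gamma\,\E[c(X_1)]$, and likewise for $Y_2$. Combining with the size assumption, with $\Gamma\ge\alpha\opt/(\epsilon B)$, and with the fact that $f(S)\ge\max\{f(X_1),f(Y_2)\}$ — because the second phase re-examines each prefix with one added element (taking that element inside $X_1$ recovers $f(X_1)$, and similarly for $Y_2$) — yields $\E[f(S)]\ge\alpha(1-\epsilon)^3\opt\ge(1/7-O(\epsilon))\opt$. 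Since the size conditions are random events, I would run this argument on the event and bound its complement inside the other case.

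In the remaining case $c(X_1)<\epsilon B$ and $c(Y_2)<\epsilon B$, Lemma~\ref{lem:XY-small} leaves three branches: either $\E[f(S)]\ge(1-\epsilon)^5\alpha\opt$ (done), or $f(O')\le\frac{5\E[f(S)]}{(1-\epsilon)^4}+\epsilon\opt$ when $c(r)<(1-\epsilon)B$, or $f(O')\le\frac{5\E[f(S)]}{(1-\epsilon)^3}+2(2\epsilon+\frac{1}{\epsilon M})\opt$ when $c(r)\ge(1-\epsilon)B$ — the last branch being exactly where routing the heavy element $r$ through Lemma~\ref{lem:r-big} and through the second-phase prefix-plus-one candidates $X^i\cup\{a_i\}$, $Y^i\cup\{b_i\}$ (each added element has cost at most $c(r)$) saves the constant. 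With $M=\frac{1}{\epsilon^2}(\frac{\Delta}{2}+1)$ and $\Delta=\lceil\log_{1/(1-\epsilon)}\frac{8\alpha}{\epsilon^2(1-8\delta)}\rceil+1$ (so that $\frac{1}{\epsilon M}$ and the leftover threshold $\Gamma(1-\epsilon)^\Delta\le\epsilon\opt/B$ are both absorbable into an additive $O(\epsilon)\opt$), every surviving branch gives $f(O')\le 5\E[f(S)]+O(\epsilon)\opt$ up to the $(1-\epsilon)$ powers. It then remains to pass from $f(O')$ to $\opt$: decompose $O=(O\cap V_0)\,\cup\,\big((O\cap V_1)\setminus X_1\big)\,\cup\,(O\cap X_1)$, peel the middle piece by submodularity so its value is controlled by $f(O')$ (after comparing the feasible set $(O\cap V_1)\setminus X_1$ with $O'=O_1\setminus X_1$), and note that $(O\cap V_0)\cup(O\cap X_1)\subseteq V_0\cup X_1$, so in the branch $c(X_1\cup V_0)\le\epsilon B$ its value is at most $(2+\epsilon)\E[f(S_1)]\le(2+\epsilon)\E[f(S)]$ by the $(2+\epsilon)$-guarantee of $\USM$, while the branch $c(X_1\cup V_0)>\epsilon B$ forces $c(X_1)$ or the $V_0$-mass large and reduces to the first case. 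Adding up, $\opt\le 5\E[f(S)]+(2+\epsilon)\E[f(S)]+O(\epsilon)\opt=(7+O(\epsilon))\E[f(S)]$, i.e.\ $\E[f(S)]\ge(1/7-\epsilon)\opt$ after rescaling $\epsilon$.

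The step I expect to be the real obstacle is this final assembly. Getting the constant down to exactly $7$ (rather than $9$) forces one to bound the $V_0$-part and the $X_1$-part of $O$ by a single $\USM$ call rather than two — hence the particular three-way split of $O$ — and to kill the heavy-element term either via Lemma~\ref{lem:r-big} in the regime $c(r)\ge(1-\epsilon)B$ or via the feasibility hypothesis $c(X_i)\le B-c(r)$ of Lemma~\ref{lem:bound-T} in the complementary regime; keeping these two mechanisms from double-counting while also fitting every $(1-\epsilon)$ power and every $\frac{1}{\epsilon M}$-sized term inside one $\epsilon\opt$ slack is the delicate bookkeeping. A secondary, milder difficulty is that the size conditions ($c(X_1)<\epsilon B$, $c(Y_2)<\epsilon B$, and their negations) defining the cases are random events, so the whole case split must be carried out under the expectation rather than on a single realization of $\Rand$.
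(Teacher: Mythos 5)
Your proposal is correct and follows essentially the same route as the paper: the identical subroutine-by-subroutine complexity count, the same case split on $c(X_1),c(Y_2)$ versus $\epsilon B$, the same decomposition of $O$ into the part inside $V_0\cup X_1$ (charged to the $(2+\epsilon)$ guarantee of $\USM$) plus $O'$ (charged to Lemma~\ref{lem:XY-small}), and the same sub-split on $c(r)$ versus $(1-\epsilon)B$ yielding $\opt\le(5+2+O(\epsilon))\E[f(S)]$. The two caveats you flag (the random case-defining events and the slight mismatch between $c(X_1)<\epsilon B$ and the guard $c(X_1\cup V_0)\le\epsilon B$) are real but are also present, unaddressed, in the paper's own proof.
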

	\begin{proof}
		$\AST$ first calls $\PAR$ to find a candidate solution $S_0$. This task takes $O(\frac{1}{\delta}\log(\frac{n}{\delta}))$ adaptivity and $O(nk\log^2n)$ query complexity \cite{Cui-aaai23-full}. For the first loop, it calls $\ThS$ $\Delta=O(\frac{1}{\epsilon}\log(\frac{1}{\epsilon}))$ times. Each time, $\Rand$ needs $O(\frac{1}{\epsilon}\log(\frac{n}{\epsilon})+ M)=O(\frac{1}{\epsilon}\log(\frac{n}{\epsilon})+\frac{1}{\epsilon^3}\log(\frac{1}{\epsilon}))=O(\log n)$ adaptive complexity and $O(\frac{nk}{\epsilon}\log(\frac{n}{\epsilon})+\frac{nk}{\epsilon^3}\log(\frac{1}{\epsilon}))=O(nk\log n)$ query complexity. 
		In the second phase, the algorithm may need $O(\frac{1}{\epsilon}\log(\frac{1}{\epsilon}))$ adaptivity and $O(\frac{n}{\epsilon^4}\log^3(\frac{1}{\epsilon}))$ query complexity to call $\USM$ algorithm of \cite{chen_stoc19} (Lines 16-17). Then, it only has two adaptive rounds and takes $O(kn)$ query complexity to find $X'^i$ and $Y'^i$ (Lines 19-24). Therefore, the adaptive complexity of the algorithm is 
		$
		O(\frac{1}{\delta}\log\frac{ n}{\delta})+O(\frac{1}{\epsilon}\log(\frac{1}{\epsilon})\log n)+ O(\frac{1}{\epsilon}\log(\frac{1}{\epsilon}))+2
		=O(\log n)
		$
		and its query complexity is
		$O(nk \log^2 n )+O(nk\log n)+O(\frac{n}{\epsilon^4}\log^3(\frac{1}{\epsilon}))+O(nk)
		=O(nk\log^2n)$.
		\\
		For the approximation factor, we consider the following cases:
		\\
		\textbf{Case 1.} If $c(X_1) \geq \epsilon B$ or $c(Y_2) \geq \epsilon B$, we have
		\begin{align*}
		f(S)&\geq \max\{f(X_1), f(Y_2)\}\geq \epsilon B \alpha\Gamma(1-\epsilon)^2
		\\
		& >\frac{(1-\epsilon)^2}{7}\opt>(\frac{1}{7}-\epsilon)\opt.
		\end{align*}
		\textbf{Case 2.} If $c(X_1) < \epsilon B$ and $c(Y_2) < \epsilon B$, we have
		\begin{align}
		f(O)&\leq f(O \cap (V_1 \setminus X_1))+f(O \cap (V_0\cup X_1)) \label{theo1}
		\\
		& = f(O')+f(O \cap (V_2\cup X_1)) \label{theo2}
		\\
		& \leq f(O') + (2+\epsilon)\E[f(S)] \label{theo3}
		\end{align}
		where inequality~\eqref{theo1} is due to the submodularity of $f$ and $(V_1\setminus X_1)\cap(V_0\cap X_1)=\emptyset$, equality~\eqref{theo2} is due to the definition of $O'$ and inequality~\eqref{theo3} is due to applying Algorithm in \cite{chen_stoc19}.
		We now apply Lemma~\ref{lem:XY-small} to bound $f(O')$. If $c(r)<(1-\epsilon)B$, then
		\begin{align}
		f(O)&\leq \frac{5\E[f(S)]}{(1-\epsilon)^4}+ \epsilon\opt+ (2+\epsilon)\E[f(S)] 
		\\
		& \leq\frac{7\E[f(S)]}{(1-\epsilon)^4}+\epsilon\opt.
		\end{align}
		Therefore 
		\begin{align*}
		\E[f(S)]\geq \frac{(1-\epsilon)^5 }{7}\opt>\frac{1-5\epsilon}{7}\opt> (\frac{1}{7}-\epsilon)\opt.
		\end{align*}
		If $c(r)\ge (1-\epsilon)B$, we consider two cases:
		\\
		- If \textbf{a)} in Lemma~\ref{lem:XY-small} happens, then
		$$
		\E[f(S)]\geq \frac{(1-\epsilon)^5}{7}\opt> (\frac{1}{7}-\epsilon)\opt.
		$$
		- If \textbf{b)} in Lemma~\ref{lem:XY-small} happens, then
		\begin{align}
		f(O)&\leq \frac{5\E[f(S)]}{(1-\epsilon)^3}+(4\epsilon+\frac{2}{\epsilon M})\opt
		+ (2+\epsilon)\E[f(S)] \nonumber
		\\
		& <\frac{7\E[f(S)]}{(1-\epsilon)^3}+ \frac{29}{7}\epsilon\opt. \label{theo4}
		\end{align}
		where the inequality~\eqref{theo4} is due to $\epsilon M =\frac{1}{\epsilon}(\frac{\Delta}{2}+1)>\frac{14}{\epsilon}$ for $\epsilon \in (0,\frac{1}{7}), \delta \in (0,\frac{1}{8})$. It follows that 
		\begin{align*}
		\E[f(S)] &\geq \frac{1}{7}(1-\epsilon)^3(1-\frac{29}{7}\epsilon)\opt 
		>(\frac{1}{7}-\epsilon)\opt
		\end{align*}
		which completes the proof.
	\end{proof}
	\section{Experimental Evaluation}
	\label{sec:expr}
	This section evaluates our $\AST$'s performance by comparing our algorithm with state-of-the-art algorithms for non-monotone $\SMK$ including:
	\begin{itemize}
		\item \textbf{$\PAR$}: The parallel algorithm in~\cite{Cui-aaai23} that runs in $O(\log n)$ adaptivity and returns a solution $S$ satisfying $\mathbb{E}[f(S)] \geq(1/8-\epsilon)\opt$.
		\item \textbf{$\PR$}: The algorithm in \cite{Cui-aaai23} that runs in $O(\log^2 n)$ adaptivity and returns a solution of $\mathbb{E}[f(S)] \geq\left(1/(5+2 \sqrt{2})-\epsilon\right)\opt$.
		\item \textbf{$\PRK$}: The parallel algorithm in \cite{Amanatidis2021a_knap} achieves an approximation factor of $(9.465+\varepsilon)$ within $O(\log n)$.
		\item \textbf{$\RaAc$}: The non-adaptive algorithm in \cite{Han2021_knap} that achieves an approximation factor of $4+\epsilon$ in query complexity of $O(n\log(k/\epsilon)/\epsilon)$.
		\item \textbf{$\RLA$}: The non-adaptive algorithm in \cite{pham-ijcai23} with a factor of $4+\epsilon$ in linear query complexity of $O(n \log (1 / \epsilon) / \epsilon)$.
	\end{itemize}
	We experimented with the following three applications:
	\paragraph{Revenue Maximization ($\RM$).}
	Given a network $G=(V, E)$ where $V$ is a set of nodes and $E$ is a set of edges. Each edge $(u,v)$ in $E$ is assigned a positive weight $w(u,v)$ sampled uniformly in $[0,1]$ and each node is assigned a positive cost $c(u)$ defined as $c(u)=$ $1- e^{\sqrt{\sum_{(u, v) \in E} w(u, v)}}$. The revenue of any subset $S \subseteq \mathcal{V}$ is defined as $f(S)=\sum_{v \in \mathcal{V} \backslash S} \sqrt{\sum_{u \in S} w_{u, v}}$. Given a budget of $B$, the goal of the problem is to select a set $S$ with the cost at most $B$ to maximize $f(S)$. As in the prior work, \cite{Amanatidis2021a_knap}, we can construct the graph $G$ using a YouTube community network dataset \cite{Han2021_knap} with 39,841 nodes and 224,235 edges.
	\paragraph{Maximum Weighted Cut ($\MWC$).} Consider a graph $G = (V, E)$ where each edge $(u,v) \in E$ has a non-negative weight $w(u,v)$. For a node subset $S\subset V$, define the weighted cut function $f(S) = \sum_{u\in V \setminus S} \sum_{v \in S} w(u,v)$. The maximum weighted cut problem seeks a subset $S\subseteq V$ that maximizes $f(S)$. As in recent work \cite{Amanatidis_sampleGreedy}, we generate an Erdős-Rényi random graph with $5,000$ nodes and an edge probability of $0.2$. The node costs $c(u)$ are randomly uniformly sampled from $(0,1)$.
	\paragraph{Image Summarization ($\IS$).} Consider a graph $G=(V, E)$ where each node $u\in V$ represents an image, and each edge $(u,v)\in E$ has a weight $w(u,v)$ showing the similarity between images $u$ and $v$. Let $c(u)$ be the cost to acquire image $u$. The goal is to find a representative subset $S\subseteq V$ under the budget $B$ that maximizes a value $f(S) = \sum_{u\in V} \max_{v\in S} w_{u,v} - \frac{1}{|V|} \sum_{u\in V}\sum_{v\in S} w_{u,v}$ \cite{fast_icml,Han2021_knap}. As in recent works \cite{Han2021_knap,fast_icml}, we create an instance as follows: First, randomly sample 500 images from the CIFAR dataset \cite{img-data} of 10,000 images, then measure the similarity between images $u$ and $v$ using the cosine similarity of their 3,072-dimensional pixel vectors.
	\\
	\textbf{Experiment setting.}	In our experiments, we set the accuracy parameter $\epsilon=0.1$ for all algorithms evaluated, and for $\AST$, we set $\delta=0.12$. We used OpenMP to program with C++ language. 
	Besides, we experimented on a high-performance computing (HPC) server cluster with the following parameters: partition=large, \#threads(CPU)=128, node=4, max memory = 3,073 GB. For $\USM$, we use setting of previous works \cite{Amanatidis2021a_knap,Cui-aaai23}, i.e, we adapt Algorithm in \cite{Feige-siamcomp11} returning $1/4-\epsilon$ ratio in one adaptive round and $O(n)$ query complexity.
	\paragraph{Experimental Result.} In Figures \ref{fig:res}(a), (c), and (e), we compare the objective values between different algorithms. The results show that our $\AST$ achieves the best objective values for both the $\RM$ and $\MWC$ applications. In $\RM$, the objective values achieved by $\RLA$, $\RaAc$, and $\PAR$ are the same, $\PRK$ attains lower objective values while $\PR$ hits the lowest objective values among the algorithms. Especially, our one marks the highest value when $B=0.015$, about $1.3$ times higher than the others in $\RM$. 
	In $\IS$, $\PRK$ reaches the highest values while $\PR$ hits the lowest. Our algorithm results in best values at some points and drops at others. As shown in Figure~\ref{fig:res} (c), most algorithms fluctuate widely. The variation in the quality of these algorithms might be due to the characteristics of this dataset.
	
	In Figures~\ref{fig:res}(b), (d), and (f), we make the comparisons about the number of adaptive rounds. The results show that $\RaAc$ and $\RLA$ always require the highest number of adaptive rounds across all three applications. For the $\AST$, the number of adaptive rounds is equivalent to $\PAR$ for $\RM$ and $\MWC$. Besides, for $\IS$, the adaptive number of rounds for $\AST$ is higher than that of $\PAR$, which has the lowest number of rounds. However, the higher number in this case is insignificant.
	Overall, our algorithm outperforms the others in both solution performance and the quantities of adaptivity.
	\begin{figure}[h]
		\centering
		{\includegraphics[width=.95\linewidth]{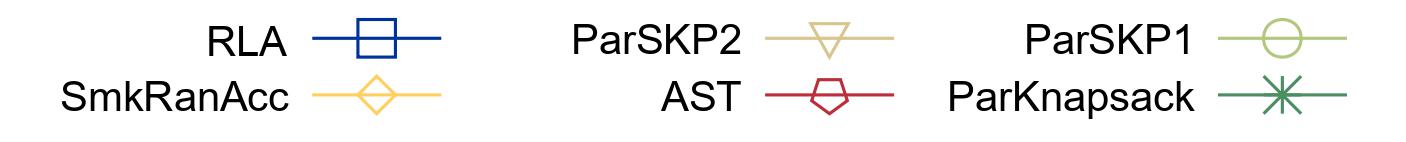}}
		\\
		{\includegraphics[width=.495\linewidth]{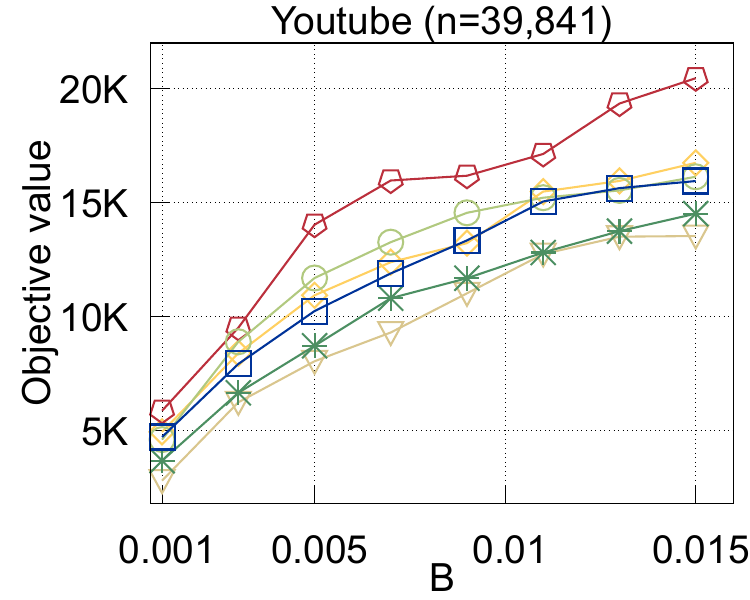}}
		{\includegraphics[width=.495\linewidth]{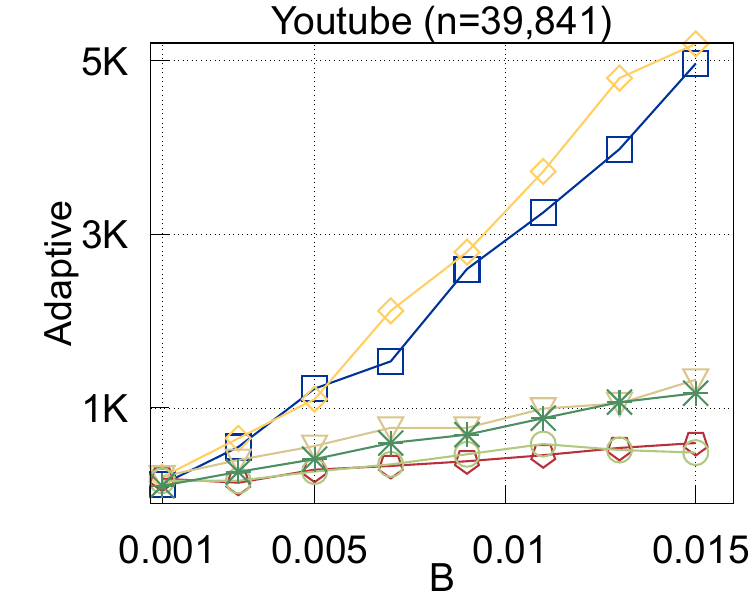}}
		\\ 
		\footnotesize{ \hspace{.5cm}	(a) \hspace{3.5cm} (b)} 
		\\
		{\includegraphics[width=.495\linewidth]{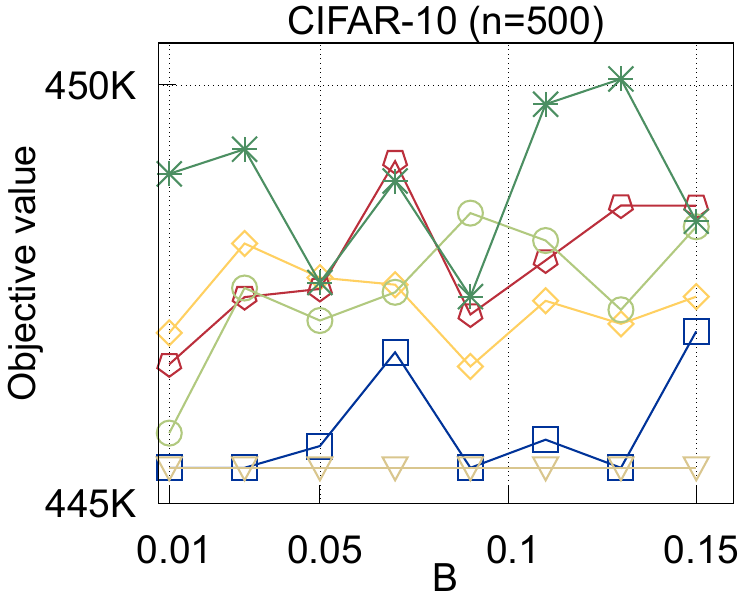}}
		{\includegraphics[width=.495\linewidth]{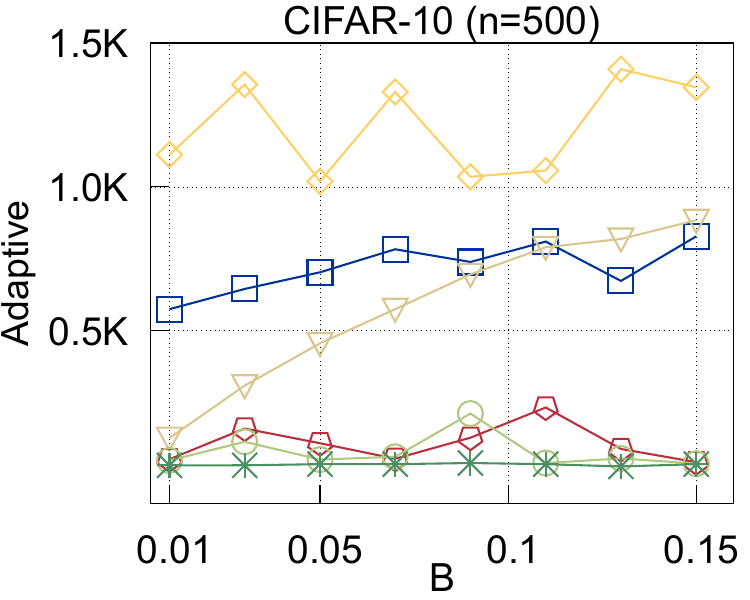}}
		\\
		\footnotesize{ \hspace{.5cm}	(c) \hspace{3.5cm}(d) }
		\\
		{\includegraphics[width=.495\linewidth]{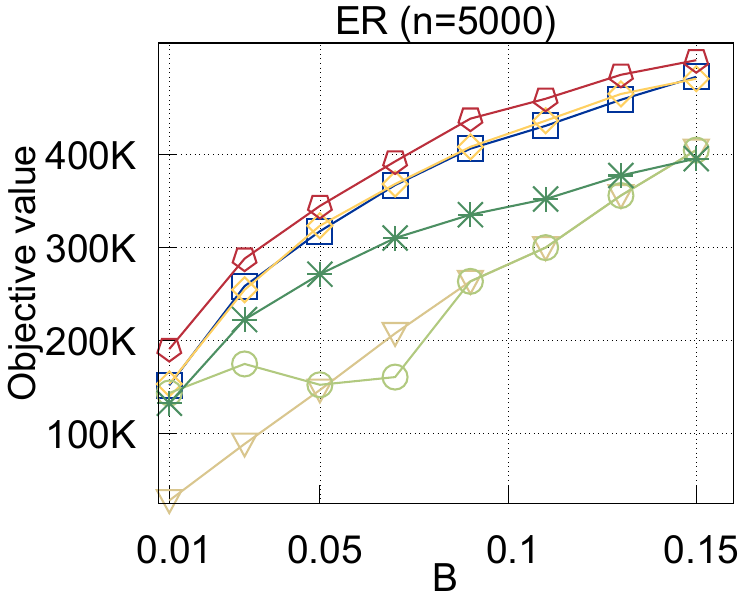}}
		{\includegraphics[width=.495\linewidth]{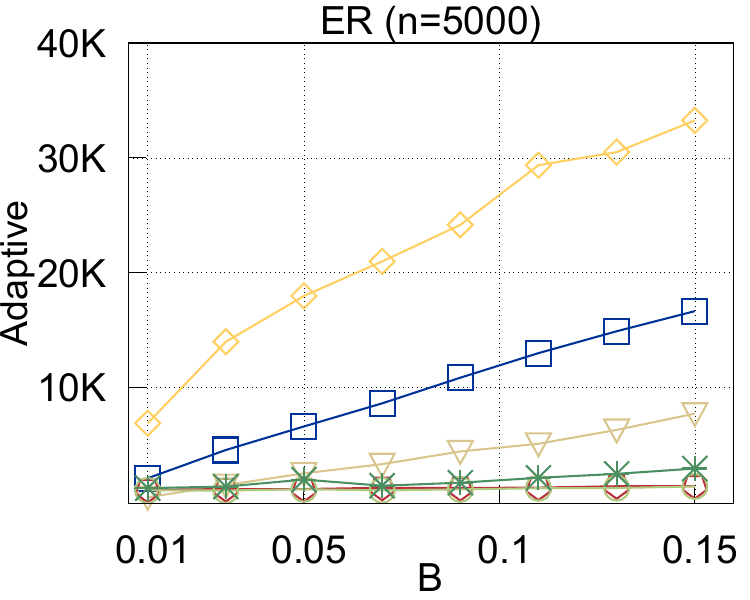}}
		\\
		\footnotesize{	 \hspace{.5cm} (e) \hspace{3.5cm}(f)}
		\caption{Performance of algorithms for $\NSMK$ on three instances: (a), (b) Revenue Maximization; (c), (d) Image Summarization and (e), (f) Maximum Weighted Cut. The budget values represent fractions of the total cost of all elements.}
		\label{fig:res}
		\vspace{-1em}
	\end{figure}
	\section{Conclusions}
	\label{sec:con}
	Motivated by the challenge of the large scale of input data, in this work, we focus on parallel approximation algorithms based on the concept of adaptive complexity. Moreover, the requirement of improving the approximation factor while decreasing the adaptivity down to $\log(n)$ motivates us to propose a competitive new algorithm.	
	We have proposed an efficient parallel algorithm $\AST$ based on a novel alternate threshold greedy strategy.
	To our knowledge, our $\AST$ algorithm is the first to achieve a constant factor approximation of $7+\epsilon$ for the above problem in the aforementioned adaptivity. 
	Our algorithm also expresses the superiority in solution quality and computation complexity compared to state-of-the-art algorithms via some illustrations in the experiment in three real-world applications. In the future, we will address another valuable question: can we reduce the query complexity of parallelized algorithms for the $\SMK$ problem?
	\section*{Acknowledgments}
	The first author (Tan D. Tran) was funded by the Master, PhD Scholarship Programme of Vingroup Innovation Foundation (VINIF), code VINIF.2023.TS.105. This work has been carried
	out partly at the Vietnam Institute for Advanced Study in Mathematics
	(VIASM). The second author (Canh V. Pham) would like to thank
	VIASM for its hospitality and financial support.
	\bibliographystyle{named}
	\bibliography{smk-ijcai} 
	\newpage
	\clearpage
	\appendix
	\onecolumn
		\section*{Appendix}
	\section{$\Rand$ Algorithm}
	In this section, we recap $\Rand$ \cite{Cui-aaai23} (Algorithm 2 of that paper), a frequently used subroutine in the proposed algorithms. For a discussion of the intuition behind $\Rand$
	and rigorous proof of Lemma~\ref{lem:cost-ex}, we refer the reader to \cite{Cui-aaai23} and its full version \cite{Cui-aaai23-full}. 
	\begin{algorithm}[h]
	\SetNlSty{text}{}{:}
	\KwIn{$\theta$, $I$, $M$, $\epsilon$, $f(\cdot)$, $c(\cdot)$} 
	$ A \leftarrow \emptyset, count \leftarrow 0$
	\\
	$ L \leftarrow \{u \in I: \frac{f(u|A)}{c(u)}\geq \rho \wedge c(A\cup \{u\})\leq B\}$;
	\\
	\While{$L\neq \emptyset \wedge count <M$ }
	{
	$\{v_1, v_2, \ldots, v_d\} \leftarrow \Get(A, L, c(\cdot))$;
	\\
	\ForEach{$i \in \{0,1, \ldots, d\}$}
	{
	$V \leftarrow \{v_1, v_2 , \ldots, v_i\}$, $G_i \leftarrow A\cup V_i$;
	\\
	$E^+_i \leftarrow \{u \in L:\frac{f(u|G_i)}{c(u)}\geq \rho c(G_i\cup \{u\})\leq B\}$;
	\\
	$E^-_i \leftarrow \{u \in L:f(u|G_i)<0 \}$;
	\\
	$D_i\leftarrow \{v_j: j\in [i] \wedge f(v_j|A\cup V_{j-1})<0 \}$;
	}
	Find $t_1 \leftarrow \min_{i\leq d}\{c(E^+)\leq (1-\epsilon)c(L)\}$, $t_2 \leftarrow \min_{i\leq d}\{\epsilon \sum_{u \in E^+_i}\}f(u|G_i)\leq \sum_{u \in E^-_{i}}|f(u|G_i)| + \sum_{v_j\in D_i}|f(v_j|A\cup V_{j-1})|$
	\\
	$t^* \leftarrow \min\{t_1, t_2\}$; $U \leftarrow U \{V_{t^*}\}$;
	\\
	With probability $p$ do:
	\\
	\ \ \ \ 	$A\leftarrow A\cup V_{t^*}$
	\\
	\ \ \ \ \textbf{If}	$t_2\leq t_1$ \textbf{then} $count\leftarrow count+1 $
	\\
	$L \leftarrow \{u\in L\setminus U: \frac{f(u|A)}{c(u)}\geq \rho \wedge c(A\cup \{u\})\leq B \}$
	}
	\Return $(A, U, L)$
	\caption{$\Rand$}
	\label{alg:ranbatch}
	\end{algorithm}
	\begin{algorithm}[h]
	\SetNlSty{text}{}{:}
	\KwIn{$\theta$, $I$, $M$, $\epsilon$, $f(\cdot)$, $c(\cdot)$} 
	$ A \leftarrow \emptyset, count \leftarrow 0$
	\\
	\While{$X\neq \emptyset$}
	{
	Draw $a_i$ uniformly at random from $X$
	\\
	$A \leftarrow [a_1, a_2, \ldots, a_i]$
	\\
	$X \leftarrow \{e \in X \setminus {a_i}: c(e)+c(A)+ c(S) \leq B\}$
	\\
	$i \leftarrow i+1$
	}
	\Return $A$
	\caption{$\Get(A, I, c(\cdot))$}
	\label{alg:rb}
	\end{algorithm}
	\subsection{Proof of Lemma~\ref{lem:add-element}}
	The proof of Lemma~\ref{lem:add-element} implies from the proof of Lemma~\ref{lem:cost-ex}.\textbf{We write down the details of the proof for the sake of completeness. }
	We first recap the Lemma 2 in~\cite{Cui-aaai23-full} for supporting the proof of Lemma~\ref{lem:add-element}.
	\begin{lemma}[Lemma 2 in~\cite{Cui-aaai23-full}]
	For any $V_{t^*}$ found in Lines 11-12 of $\Rand$ and any $V_{t^*}$, let $\lambda(u)$
	denote the set of elements in $V_{t^*}$ selected before $u$ (note that $V_{t^*}$ is an ordered list according to Line 6 of $\Rand$), and $\lambda(u)$ does not include $u$. Let $A$ and $U$ be the sets returned
	by $\Rand(\rho, I, M, p, \epsilon, f(\cdot), c(\cdot))$, where the elements in $U$ are $\{u_1, u_2,\ldots, u_s\}$ (listed
	according to the order they are added into $U$). Let $u_j$ be a ``dummy element'' with zero
	marginal gain and zero cost for all $s < j \leq|I|$. Then we have:
	\begin{align}
	\forall j \in [|I|]: \E[f(u_j| \{u_1, \ldots, u_{j-1} \} \cap A\cup \lambda(u_j)) |\F_{j-1}]\geq (1-\epsilon)^2\rho \cdot \E[c(u_j) |\F_{j-1}] 
	\end{align}
	where $\F_{j-1}$ denotes the filtration capturing all the random choices made until the moment right before $u_j$ is selected.
	\label{lem:han-lem2}
	\end{lemma}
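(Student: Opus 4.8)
The plan is to condition on the filtration $\F_{j-1}$ and reduce the claim to a short deterministic inequality about one step of the inner loop of $\Rand$. If $j>s$, then $u_j$ is a dummy with zero marginal gain and zero cost, so both sides vanish; hence assume $j\le s$ and let $u_j$ be the $m$-th element $v_m$ of the $\Get$-batch $\{v_1,\dots,v_d\}$ that contains it, produced in an outer iteration of $\Rand$. The first thing I would establish is that the reference set $(\{u_1,\dots,u_{j-1}\}\cap A)\cup\lambda(u_j)$ appearing in the lemma equals $G_{m-1}=A_0\cup\{v_1,\dots,v_{m-1}\}$, where $A_0$ is the value of the accepted set when this batch is processed (the quantity $\Rand$ uses to form the $G_i$): the elements of $U$ preceding $u_j$ are the earlier committed batches together with $\lambda(u_j)=\{v_1,\dots,v_{m-1}\}$, intersecting with the final accepted set $A$ keeps exactly the accepted earlier-batch elements (which form $A_0$) and possibly $\{v_1,\dots,v_{m-1}\}$, and adjoining $\lambda(u_j)$ then yields $A_0\cup\{v_1,\dots,v_{m-1}\}$ irrespective of this batch's probability-$p$ coin. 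In particular the reference set is $\F_{j-1}$-measurable and coincides with the $G_{m-1}$ of $\Rand$.

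Next I would observe that the survival event ``$v_m$ lies in the committed prefix,'' i.e.\ $m\le t^*$ with $t^*=\min\{t_1,t_2\}$, is itself $\F_{j-1}$-measurable: $t_1\ge m$ and $t_2\ge m$ are exactly the statements that the two stopping tests of $\Rand$ fail at every $i\le m-1$, and the quantities entering those tests ($c(E^+_i)$, $\sum_{u\in E^+_i}f(u|G_i)$, $\sum_{u\in E^-_i}|f(u|G_i)|$, and $\sum_{v_l\in D_i}|f(v_l|A_0\cup V_{l-1})|$) are all determined by $A_0$ and $v_1,\dots,v_{m-1}$. On the complementary event $u_j$ is a dummy and the inequality is trivial; on $\{m\le t^*\}$ the construction of $\Get$ makes $v_m$ uniform over $\hat L:=\{e\in L\setminus\{v_1,\dots,v_{m-1}\}:c(G_{m-1}\cup e)\le B\}$, so $\E[f(u_j|G_{m-1})\mid\F_{j-1}]$ and $\E[c(u_j)\mid\F_{j-1}]$ are just the averages of $f(e|G_{m-1})$ and $c(e)$ over $\hat L$, and it suffices to prove $\sum_{e\in\hat L}f(e|G_{m-1})\ge(1-\epsilon)^2\rho\sum_{e\in\hat L}c(e)$.

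Then I would derive this inequality from the two tests not having fired at step $m-1$: from $t_1\ge m$ we get $c(E^+_{m-1})>(1-\epsilon)c(L)\ge(1-\epsilon)c(\hat L)$, and from $t_2\ge m$ we get $\sum_{u\in E^-_{m-1}}|f(u|G_{m-1})|<\epsilon\sum_{u\in E^+_{m-1}}f(u|G_{m-1})$. Writing $\hat L$ as the disjoint union of $E^+_{m-1}$ (contained in $\hat L$, since its elements are feasible and at density $\rho>0$, hence not already in $G_{m-1}$) and the rest, using $f(e|G_{m-1})\ge\rho c(e)$ on $E^+_{m-1}$, discarding the non-negative marginals outside it, and bounding the negative ones by $\{e\in\hat L:f(e|G_{m-1})<0\}\subseteq E^-_{m-1}$, I obtain
\begin{align*}
\sum_{e\in\hat L}f(e|G_{m-1})&\ge\sum_{e\in E^+_{m-1}}f(e|G_{m-1})-\sum_{u\in E^-_{m-1}}|f(u|G_{m-1})|\\
&>(1-\epsilon)\sum_{e\in E^+_{m-1}}f(e|G_{m-1})\ge(1-\epsilon)\rho\,c(E^+_{m-1})>(1-\epsilon)^2\rho\,c(\hat L),
\end{align*}
which is the desired inequality, and dividing by $|\hat L|$ completes this case.

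The part I expect to be the main obstacle is the reduction of the first two paragraphs: fixing the timeline so that the reference set $G_{m-1}$ and the survival event $\{m\le t^*\}$ are genuinely $\F_{j-1}$-measurable, and verifying that the probability-$p$ acceptance of the batch never alters the reference set. Once that bookkeeping is settled, the remaining estimate is the three-line chain above --- the standard analysis of parallel threshold-sampling subroutines, with the cost test $t_1$ controlling the surviving cost fraction of $L$ and the non-monotone test $t_2$ absorbing the negative marginals through $E^-_{m-1}$ and $D_{m-1}$.
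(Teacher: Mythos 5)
This statement is not proved in the paper at all: it is quoted verbatim as Lemma 2 of the full version of \cite{Cui-aaai23-full}, and the paper explicitly defers its proof to that reference, using it only as a black box to derive Lemma~\ref{lem:add-element}. So there is no in-paper proof to compare against; what you have written is a from-scratch reconstruction of the external argument, and on its merits it is essentially correct and follows the same route as the original analysis of $\Rand$: identify the reference set $(\{u_1,\dots,u_{j-1}\}\cap A)\cup\lambda(u_j)$ with $G_{m-1}=A_0\cup\{v_1,\dots,v_{m-1}\}$ (correctly observing that the union with $\lambda(u_j)$ makes the current batch's probability-$p$ coin irrelevant), note that the stopping tests at indices $i\le m-1$ depend only on $A_0$, $L$ and $v_1,\dots,v_{m-1}$ so that the event $\{t^*\ge m\}$ is determined before $v_m$ is drawn, and then reduce to the deterministic estimate $\sum_{e\in \hat L}f(e|G_{m-1})>(1-\epsilon)^2\rho\, c(\hat L)$, which your three-line chain (drop $D_{m-1}$, use $E^+_{m-1}\subseteq\hat L$ since $\rho>0$, bound negatives by $E^-_{m-1}$, and use $c(E^+_{m-1})>(1-\epsilon)c(L)\ge(1-\epsilon)c(\hat L)$) establishes correctly, yielding exactly the $(1-\epsilon)^2$ factor.

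One piece of bookkeeping in your second paragraph is stated inaccurately, though it does not break the argument: on the event $\{m>t^*\}$ the element $u_j$ is not ``a dummy''---it would simply be selected in a later batch (dummies occur only when $j>s$, i.e., when the algorithm terminates with fewer than $j$ committed elements). Under the intended reading of $\F_{j-1}$ as the history up to the moment right before $u_j$ is selected, the complementary event cannot occur at that moment at all, precisely because $\{t^*\ge m\}$ is measurable with respect to the pre-draw history; so the correct phrasing is that conditioning on $\F_{j-1}$ already forces survival of position $m$, and hence does not bias the uniform draw of $v_m$ from $\hat L$ made by $\Get$. With that sentence repaired (or the filtration formalized via stopping times, which is the level of rigor the cited source also glosses), your reconstruction matches the argument of \cite{Cui-aaai23-full}.
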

	
	Consider the sequence $\{u_1, u_2, \ldots, u_{s+1}, \ldots, u_{|I|}\}$ to be defined in Lemma \ref{lem:han-lem2}. For each $j \in [|I|]$, define the random variable $\delta_1(u_j)=f(u_j|\{u_1, \ldots, u_{j-1}\}\cap A \cup \lambda(u_j))$ if $u_j \in A$ and $\delta_1(u_j)=0$ otherwise, and also define $\delta_2(u_j)=c(u_j)$ if $u_j \in A$ and $\delta_2(u_j)=0$ otherwise. So we have $f(A)\geq \sum_{j=1}^{|I|}$ and $c(A)=\sum_{j=1}^{|I|}\delta_2(u_j)$. Due to the linearity of expectation and the law of total expectation, we only need to prove
	\begin{align}
	\forall j \in [|I|], \forall \F_{j-1}: \E[f(\delta_1(u_j)| \{u_1, \ldots, u_{j-1} \} \cap A\cup \lambda(u_j)) |\F_{j-1}]\geq (1-\epsilon)^2\rho \cdot \E[\delta_2(u_j) |\F_{j-1}] 
	\end{align}
	where $\F_{j-1}$ is the filtration defined in Lemma \ref{lem:han-lem2}. This trivially holds for $j > s$. For any $j\leq s$, we have
	\begin{align}
	\E[\delta_1(u_j)|\F_{j-1}]&= \E[\E[\delta_1(u_j)|\F_{j-1}, u_j]|\F_{j-1}]
	\\
	&= p \E[f(u_j|\{u_1, \ldots, u_{j-1} \cap A \cup \lambda(u_j)\})|\F_{j-1}],
	\end{align}
	where the second equality is because each $u_j \in U$ is added into $A$ with probability $p$, which is independent of the selection of $u_j$. Similarity, we can prove $\E[\delta_2(u_j)|\F_{j-1}]=p\cdot \E[c(u_j)|\F_{j-1}]$. Combine two cases with $p=1$ and Lemma~\ref{lem:han-lem2} we have:
	\begin{align}
	\E[f(u_j| \{u_1, \ldots, u_{j-1} \} \cap A\cup \lambda(u_j)))]=	(1-\epsilon)^2\rho \E[c(u_j)]
	\end{align}
	which implies the proof.
	\section{Missing proofs of Section~\ref{sec:alg}}
	We first provide some notations used for the proofs. 
	\begin{itemize}
	\item $X$, $Y$ are the sets after the the first loop of $\AST$.
	\item Supposing $X$ and $Y$ ordered: $X=\{x_1, x_2, \ldots, x_{|X|}\}$, $Y=\{y_1, y_2, \ldots, y_{|Y|}\}$, we conduct: $X^i=\{x_1, x_2, \ldots, x_i\}$, $Y^i=\{y_1, y_2, \ldots, y_{i}\}$.
	\item 	 $t=\max\{i\in \mathbb{N}: c(X^i)+c(r)\leq B-c(r)\}$, $u=\max\{i\in \mathbb{N}: c(Y^i)+c(r)\leq B-c(r)\}$.
	\item $X_i$ and $Y_i$ are $X$ and $Y$ after iteration $i$, respectively.
	\item For $x \in X\cup Y$, we assume that $x$ is added into $X$ or $Y$ at iteration $l(x)$. 
	\item $\theta^X_{i}$ is $\theta^X$ at iteration $i$, $\theta^Y_{i}$ is $\theta^Y$ at iteration $i$.
	\item $\theta^X_{last}$ and  $\theta^Y_{last}$ are $\theta^X$ and $\theta^Y$ at the last iteration when $X$ and $Y$ are considered to update.
	\item For an element $e \in X\cup Y$, we denote $X_{<e}$, $Y_{<e}$, $\theta^X_e$ and $\theta^Y_e$ as $X$, $Y$, $\theta^X$ and $\theta^Y$ right before $e$ is selected into $X$ or $Y$, respectively. 
	\item $O$ is an optimal solution , $O_1$ is an optimal solution of $\SMK$ problem over the instance $(V_1, f, B)$.
	\item $O'=O_1 \setminus X_1$, $O^r= O'\setminus \{r\}$.
	\end{itemize}
	\subsection{Proof of Lemma~\ref{lem:r-big}}
	\textbf{Prove  a), b)}.  	In this case we have $c(X^t)\leq B-c(r)\leq \epsilon B$, we consider following cases:
	\\
	\textbf{Case 1.} If $X^t=X$, denote $Y_{(t)}=Y_{<x_t}$, i.e., the set $Y$ right before the algorithm obtains $X^t$. For any element $e\in O'\setminus (X\cup Y_{(t)})$, we have $c(X^t)+c(e)\leq B$. Therefore we consider $O'\setminus (X\cup Y_{(t)})=O'^1\cup O'^2$, where $O'^1=\{e\in O'\setminus (X\cup Y_{(t)}): f(e|X)< c(e) \theta_{last}^X\}$, $O'^2=\{e\in O'\setminus (X\cup Y_{(t)}): f(e|X)\geq c(e) \theta_{last}^X\}$. By setting $\Delta$, at any iteration $i$ we have
	\begin{align}
	\frac{8\alpha f(S_0)(1-\epsilon)}{(1-8\delta)\epsilon B}=\Gamma(1-\epsilon) \geq \Gamma(1-\epsilon)^i \geq \Gamma(1-\epsilon)^\Delta \geq \frac{\epsilon f(S_0)(1-\epsilon)}{B}
	\end{align}
	and therefore $\theta^X_{last}\leq \frac{\Gamma(1-\epsilon)^\Delta}{1-\epsilon}\leq \frac{\epsilon\opt}{B}$. 
	By applying Lemma~\ref{lem:bound-T} and Lemma~\ref{lem:cost-ex}
	we have
	\begin{align}
	f(X\cup O')-f(X)&\leq \sum_{e\in O'\setminus X}f(e|X) \ \ \ \ \mbox{(By the submodularity of $f$)}
	\\
	&=\sum_{e\in O'\cap Y_{(t)}}f(e|X)+\sum_{e\in O'\setminus (X\cup Y_{(t)})}f(e|X)
	\\
	&=\sum_{e\in O'\cap Y_{(t)}}f(e|X)+\sum_{e\in O'^1}f(e|X)+ \sum_{e\in O'^2}f(e|X)
	\\
	&\leq \sum_{e\in O'\cap Y_{(t)}}\frac{\E[f(e|Y_{<e})]}{(1-\epsilon)^3}+\epsilon\opt+ c(O'^1)\theta^X_{last} + \frac{\opt}{\epsilon M} \ \ \ \mbox{(By applying Lemma \ref{lem:cost-ex} and Lemma \ref{lem:bound-T})}
	\\
	&\leq \frac{\E[f(Y_{(t)})]}{(1-\epsilon)^3}+\epsilon\opt+ B\theta_{last}^X + \frac{\opt}{\epsilon M}
	\\
	&< \frac{\E[f(Y)]}{(1-\epsilon)^3}+\epsilon\opt+\epsilon \opt + \frac{\opt}{\epsilon M}.
	\end{align}
	It follows that
	\begin{align}
	f(X\cup O') & \leq f(X)+\frac{\E[f(Y)]}{(1-\epsilon)^3}+ (2\epsilon+\frac{1}{\epsilon M})\opt
	\\
	& < (1+\frac{1}{(1-\epsilon)^3})\E[f(S)]+(2\epsilon+\frac{1}{\epsilon M})\opt.
	\end{align}
	\textbf{Case 2.} If $X^t\neq X$, we now consider following cases.
	Consider the iteration $z, z\geq 1$ that 
	\begin{align}
	\frac{(1-\epsilon)\alpha\opt}{ B}\leq \theta_z=\frac{\alpha\Gamma (1-\epsilon)^z}{ \epsilon B} \leq \frac{\alpha\opt}{ B}.
	\end{align}
	We define an odd integer $l$ as follows
	\begin{align}
	l=\begin{cases}
	z, \mbox{if $z$ is odd}
	\\
	z+1, \mbox{otherwise}.
	\end{cases}
	\end{align}
	It follows that $$\frac{(1-\epsilon)^2\alpha\opt}{B}\leq \theta_z\leq \theta^X_l \leq \theta_z\leq \frac{\alpha\opt}{B}$$
	We consider two following cases:
	\\
	\textbf{Case 2.1.} If $X^{t+1} \subseteq X_l$, we further consider two sub-cases
	\begin{itemize}
	\item	 If $c(X_l)\geq (1-\epsilon)B$, by Lemma \ref{lem:cost-ex} we have
	$$\E[f(S)]\geq \E[f(X_l)]\geq \E[c(X_l)](1-\epsilon)^2\theta^X_l\geq (1-\epsilon)^5\alpha\opt.$$ 
	\item 	If $c(X_l)< (1-\epsilon)B$. Since $c(O^r)\leq \epsilon B$, $c(X_l)+c(e)< B$, for all $e\in O^r$. Combine this with the Lemma~\ref{lem:bound-T} and Lemma~\ref{lem:cost-ex},
	we have
	\begin{align}
	f(X_l\cup O')-f(X_l \cup \{r\})&\leq \sum_{e \in O'^r\setminus X_l}f(e|X_l)
	\\
	& \leq  \sum_{e \in O'^r\cap Y_{l-1}}f(e|X_l)+ \sum_{e \in O'^r\setminus (X_l\cup Y_{l-1})}f(e|X_l)
	\\
	&< \sum_{e \in O'^r\cap Y_{l-1}}\frac{\E[f(e|Y_{<e})]}{(1-\epsilon)^3}+\epsilon\opt+ c(O'^r)\theta^X_l + \frac{\opt}{\epsilon M}
	\\
	&\leq \frac{\E[f(Y_{l-1})]}{(1-\epsilon)^3}+\epsilon\alpha\opt+\alpha\epsilon\opt+ \frac{\opt}{\epsilon M}
	\\
	&\leq  \frac{\E[f(Y)]}{(1-\epsilon)^3}+2\alpha\epsilon\opt + \frac{\opt}{\epsilon M}.
	\end{align}
	Combine this with the selection rule of final solution: $f(X_l\cup \{r\})\leq f(S)$, we have
	\begin{align}
	f(X_l\cup O')& <  f(X_l\cup\{r\} )+\frac{\E[f(Y)]}{(1-\epsilon)^3}+2\alpha\epsilon\opt + \frac{\opt}{\epsilon M}
	\\
	&\leq (1+\frac{1}{(1-
	\epsilon)^3})\E[f(S)]+2\alpha\epsilon\opt + \frac{\opt}{\epsilon M}.
	\end{align}
	\end{itemize}
	\textbf{Case 2.2.} If $ X_l\subset X^{t+1}$, then $c(X_l)+c(e)\leq B$ for all $e\in O_1$. We thus also have
	\begin{align}
	f(X_l\cup O')-f(X_l \cup \{r\})&\leq \sum_{e \in O'^r\setminus X_l}f(e|X_l)
	\\
	& = \sum_{e \in O'^r\cap Y_{l-1}}f(e|X_l)+ \sum_{e \in O'^r\setminus (X_l\cup Y_{l-1})}f(e|X_l)
	\\
	&< \sum_{e \in O'^r\cap X_{l-1}}\frac{\E[f(e|Y_{<e})]}{(1-\epsilon)^3}+\epsilon\opt+ c(O'^r)\theta^X_l + \frac{\opt}{\epsilon M}
	\\
	&\leq \frac{\E[f(Y_{l-1})]}{(1-\epsilon)^3}+\epsilon\alpha\opt+\alpha\epsilon\opt+ \frac{\opt}{\epsilon M}
	\\
	&\leq  \frac{\E[f(Y)]}{(1-\epsilon)^3}+2\alpha\epsilon\opt + \frac{\opt}{\epsilon M}.
	\end{align}
	Note that $f(X_l\cup \{r\})\leq f(S)$ due to the selection rule of the second loop. Therefore 
	\begin{align}
	f(X_l\cup O')&\leq (1+\frac{1}{(1-
	\epsilon)^3})\E[f(S)]+2\alpha\epsilon\opt + \frac{\opt}{\epsilon M}.
	\end{align}
	Combine all above cases with note that $\alpha<1$, we obtain the proof.
	\\
	\textbf{Prove  c), d)}. We prove this case by similar argument as previous case and applying Lemma~\ref{lem:bound-T} with $i$ is odd.
	\\
	\textbf{Case 1.} If $Y^u=Y$, denote $X_{(u)}=Y_{<y_u}$, i.e., the set $X$ right before the algorithm obtains $Y^u$. For any element $e\in O'\setminus (Y\cup X_{(u)})$, we have $c(Y^u)+c(e)\leq B$. Therefore we consider $O'\setminus (Y\cup X_{(u)})=O'_1\cup O'_2$, where $O_1'=\{e\in O'\setminus (Y\cup X_{(u)}): f(e|Y)< c(e) \theta_{last}^Y\}$, $O_2'=\{e\in O'\setminus (Y\cup X_{(u)}): f(e|Y)\geq c(e) \theta_{last}^Y\}$. We have:
	$\theta^Y_{last}\leq \frac{\Gamma(1-\epsilon)^\Delta}{1-\epsilon}\leq \frac{\epsilon\opt}{B}$. 
	By applying Lemma~\ref{lem:bound-T} and Lemma~\ref{lem:cost-ex}
	we have
	\begin{align}
	f(Y\cup O')-f(Y)&\leq \sum_{e\in O'\setminus Y}f(e|Y) \ \ \ \ \mbox{(By the submodularity of $f$)}
	\\
	&=\sum_{e\in O'\cap X_{(u)}}f(e|Y)+\sum_{e\in O'\setminus (Y\cup X_{(u)})}f(e|Y)
	\\
	&=\sum_{e\in O'\cap X_{(u)}}f(e|Y)+\sum_{e\in O'_1}f(e|Y)+ \sum_{e\in O'_2}f(e|Y)
	\\
	&< \sum_{e\in O'\cap X_{(u)}}\frac{\E[f(e|X_{<e})]}{(1-\epsilon)^3}+\epsilon\opt+ c(O'_1)\theta^X_{last} + \frac{\opt}{\epsilon M} 
	\\
	&\leq \frac{\E[f(e|X_{<e})]}{(1-\epsilon)^3}+\epsilon\opt+ B\theta_{last}^X + \frac{\opt}{\epsilon M}
	\\
	&\leq  \frac{\E[f(X)]}{(1-\epsilon)^3}+\epsilon\opt+\epsilon \opt + \frac{\opt}{\epsilon M}.
	\end{align}
	It follows that
	\begin{align}
	f(Y\cup O') & < f(Y)+\frac{\E[f(X)]}{(1-\epsilon)^3}+ (2\epsilon+\frac{1}{\epsilon M})\opt
	\\
	& \leq  (1+\frac{1}{(1-\epsilon)^3})\E[f(S)]+(2\epsilon+\frac{1}{\epsilon M})\opt.
	\end{align}
	\textbf{Case 2.} If $Y^u\neq Y$, Consider the iteration $z, z\geq 1$ that 
	\begin{align}
	\frac{(1-\epsilon)\alpha\opt}{ B}\leq \theta_z=\frac{\alpha\Gamma (1-\epsilon)^z}{ \epsilon B} \leq \frac{\alpha\opt}{ B}.
	\end{align}
	We define an even integer $l$ as follows
	\begin{align}
	l=\begin{cases}
	z, \mbox{if $z$ is even}
	\\
	z+1, \mbox{otherwise}.
	\end{cases}
	\end{align}
	It follows that $$\frac{(1-\epsilon)^2\alpha\opt}{B}\leq \theta_z\leq \theta^X_l \leq \theta_z\leq \frac{\alpha\opt}{B}$$
	We consider two following cases:
	\\
	\textbf{Case 2.1.} If $Y^{u+1} \subseteq Y_l$, we further consider two sub-cases:
	\begin{itemize}
	\item	 If $c(Y_l)\geq (1-\epsilon)B$, by Lemma \ref{lem:cost-ex} we have
	$$\E[f(S)]\geq \E[f(Y_l)]\geq \E[c(Y_l)](1-\epsilon)^2\theta^Y_l\geq (1-\epsilon)^5\alpha\opt.$$ 
	\item 	If $c(Y_l)< (1-\epsilon)B$. Since $c(O'^r)\leq \epsilon B$, $c(Y_l)+c(e)< B$, for all $e\in O'^r$. Combine this with the Lemma~\ref{lem:bound-T} and Lemma~\ref{lem:cost-ex},
	we have
	\begin{align}
	f(Y_l\cup O')-f(Y_l\cup \{r\})&\leq \sum_{e \in O'^r\setminus Y_l}f(e|Y_l)
	\\
	& = \sum_{e \in O'^r\cap X_{l-1}}f(e|Y_l)+ \sum_{e \in O'^r\setminus (Y_l\cup X_{l-1})}f(e|Y_l)
	\\
	&< \sum_{e \in O'^r\cap X_{l-1}}\frac{\E[f(e|X_{<e})]}{(1-\epsilon)^3}+\epsilon\opt+ c(O'^r)\theta^Y_l + \frac{\opt}{\epsilon M}
	\\
	&\leq \frac{\E[f(X_{l-1})]}{(1-\epsilon)^3}+\epsilon\alpha\opt+\alpha\epsilon\opt+ \frac{\opt}{\epsilon M}
	\\
	&\leq  \frac{\E[f(X)]}{(1-\epsilon)^3}+2\alpha\epsilon\opt + \frac{\opt}{\epsilon M}
	\end{align}
	Combine this with the selection rule of final solution, we have
	\begin{align}
	f(Y_l\cup O')& <  f(Y_l)+\frac{\E[f(X)]}{(1-\epsilon)^3}+2\alpha\epsilon\opt + \frac{\opt}{\epsilon M}
	\\
	&\leq (1+\frac{1}{(1-
	\epsilon)^3})\E[f(S)]+2\alpha\epsilon\opt + \frac{\opt}{\epsilon M}
	\end{align}
	\end{itemize}
	\textbf{Case 2.2.} If $ Y_l\subset Y^{u+1}$, then $c(Y_l)+c(e)\leq B$ for all $e\in O'$. We thus also have
	\begin{align}
	f(Y_l\cup O')-f(Y_l \cup \{r\})&\leq \sum_{e \in O'^r\setminus Y_l}f(e|Y_l)
	\\
	& = \sum_{e \in O'^r\cap X_{l-1}}f(e|Y_l)+ \sum_{e \in O'^r\setminus (Y_l\cup X_{l-1})}f(e|Y_l)
	\\
	&< \sum_{e \in O'^r\cap X_{l-1}}\frac{\E[f(e|X_{<e})]}{(1-\epsilon)^3}+\epsilon\opt+ c(O^r)\theta^X_l + \frac{\opt}{\epsilon M}
	\\
	&\leq \frac{\E[f(X_{l-1})]}{(1-\epsilon)^3}+\epsilon\alpha\opt+\alpha\epsilon\opt+ \frac{\opt}{\epsilon M}
	\\
	&\leq  \frac{\E[f(X)]}{(1-\epsilon)^3}+2\alpha\epsilon\opt + \frac{\opt}{\epsilon M}.
	\end{align}
	Note that $f(Y_l\cup \{r\})\leq f(S)$ due to the selection rule of the second loop. Therefore 
	\begin{align}
	f(Y_l\cup O')&< (1+\frac{1}{(1-
	\epsilon)^3})\E[f(S)]+2\alpha\epsilon\opt + \frac{\opt}{\epsilon M}.
	\end{align}
	Combine all cases, we obtain the proof.
	\subsection{Proof of Lemma~\ref{lem:XY-small}}
	We prove the Lemma by carefully considering the following cases:
	\\
	\textbf{Case 1.} If $c(r)< (1-\epsilon)B$, then $B-c(r)> \epsilon B$.
	\\
	\textbf{Case 1.1.} If $c(X)$ and $c(Y)$ are both greater than $B-c(r)$. Since $c(X^{t+1})>\epsilon B$ and $c(Y^{u+1})>\epsilon B$,
	the algorithm must obtain $X^{t+1}$ and $Y^{u+1}$ after $i$-th iteration, $i\geq 3$. We prove this case when \textbf{the algorithm obtains $X^t$ before $Y^u$}. When considering the subset $T\subseteq O'$ in Lemma~\ref{lem:bound-T}, the roles of $X$ and $Y$ are the same. Thus
	we can prove similarly for the remaining case.
	Since the elements in $X$ are added at the odd iterations, any element $e\in O'\setminus X^t$ was not added into $X$ at iteration $s=l(x_{t+1})-2$. 
	It is easy to find that $X_{s}\subseteq X^t$. By applying Lemma~\ref{lem:bound-T}, we have
	\begin{align}
	f(O'\cup X^{t})-f(X^{t}\cup \{r\})&\leq \sum_{e\in O'^r\setminus X^t}f(e|X^t)
	\\
	&= \sum_{e\in Y_{<x_t}\cap O'^r}f(e|X^{t})+ \sum_{e\in O'^r\setminus (X^t\cup Y_{<x_t})}f(e|X^{t})
	\\
	& < \sum_{e\in Y_{<x_t}\cap O'^r}\frac{\E[f(e|Y_{<e})]}{(1-\epsilon)^3}+\epsilon \opt+ \sum_{e\in O'^r\setminus (X^t\cup Y_{<x_t})}f(e|X^{t}).\label{ine:com1}
	\end{align}
	Besides, applying Lemma~\ref{lem:bound-T} again gives:
	\begin{align}
	f(O'\cup Y^u)-f(Y^u\cup \{r\})
	&=\sum_{e \in O'^r \cap X_{<y_u}}f(e|Y^u)+\sum_{e \in O'^r\setminus ( X_{<y_u}\cup Y^u)}f(e|Y^u)
	\\
	&\leq \sum_{e \in O'^r \cap X_{<y_u}}\frac{\E[f(e|X_{<e})]}{(1-\epsilon)^3}+\epsilon \opt+\sum_{e \in O'^r\setminus ( X_{<y_u}\cup Y^u)}f(e|Y^u).\label{ine:com2}
	\end{align}
	By the submodularity of $f$ and $X^t\cap Y^u=\emptyset$, we have
	\begin{align}
	f(O')&\leq f(O'\cup X^t)+f(O'\cup Y^u).
	\end{align}
	From now on, it is necessary to bound some terms of the right hand of~\eqref{ine:com1} and \eqref{ine:com2}.
	\begin{itemize}
	\item[a)] Bound of 
	$
	\sum_{e\in O'^r\setminus (X^{t}\cup Y_{<x_t})}f(e|X^{t})+ \sum_{e \in O'^r \cap X_{<y_u}}\frac{\E[f(e|X_{<e})]}{(1-\epsilon)^3}
	$.
	\\
	- \underline{If $x_{t+1} \notin O'^r$}, 
	$O'^r \setminus (X^t\cup Y_{<x_t})=O'^r \setminus (X^{t+1}\cup Y_{<x_t})$ and $O'^r\cap X^t=O'^r\cap X^{t+1}$.
	Any element $e\in O'^r\setminus (X^{t+1}\cup Y_{<x_t})$ was not added in to $X$ at iteration $s=l(x_{t+1})-2$. By applying Lemma~\ref{lem:cost-ex}, we have
	\begin{align}
	\sum_{e\in O'^r\setminus (X^{t}\cup Y_{<x_t})}f(e|X^{t})&=\sum_{e\in O'^r\setminus (X^{t+1}\cup Y_{<x_t})}f(e|X^{t})
	\\
	&\leq \sum_{e\in O'^r\setminus (X^{t+1}\cup Y_{<x_t})}f(e|X_{s})
	\\
	&< c(O'^r\setminus (X^{t+1}\cup Y_{<x_t}))\frac{\theta^X_{(t+1)}}{(1-\epsilon)^2}+\frac{\opt}{\epsilon M}. \label{com-last}
	\end{align}
	On the other hand, since $c(X^{t+1})>B-c(r)\geq c(O'^r)$ so $c(X^{t+1}\setminus O'^r)> c(O'^r\setminus X^{t+1})\geq c(O'^r\setminus (X^{t+1}\cup Y_{<x_t}))$. Combine this with \eqref{com-last}, we have
	\begin{align}
	\sum_{e \in X^{t+1}\setminus O'^r}f(e|X_{<e})&\geq c(X^{t+1}\setminus O'^r)\theta^X_{(t+1)}
	\\
	&\geq(1-\epsilon)^2\frac{c(X^{t+1}\setminus O'^r)}{ c(O'^r\setminus (X^{t}\cup Y_{<x_t}))} \left( \sum_{e\in O'^r\setminus (X^{t}\cup Y_{<x_t})}f(e|X^{t})-\frac{\opt}{\epsilon M} \right) 
	\\
	&> (1-\epsilon)^2 \left( \sum_{e\in O'^r\setminus (X^{t}\cup Y_{<x_t})}f(e|X^{t})-\frac{\opt}{\epsilon M} \right). \label{com-2}
	\end{align}
	From~\eqref{com-2} and~\eqref{com-last} with a note that $X_{<y_u}\subseteq X^t$ we obtain 
	\begin{align}
	&\sum_{e\in O'^r \setminus (X^{t}\cup Y_{<x_t})} f(e|X^t)+\sum_{e\in O'^r \cap X_{<y_u}} \frac{\E[f(e|X_{<e})]}{(1-\epsilon)^3}
	\\
	& \leq \frac{\sum_{e \in X^{t+1}\setminus O'^r}f(e|X_{<e})}{(1-\epsilon)^2}+\sum_{e\in O'^r \cap X^{t}} \frac{\E[f(e|X_{<e})]}{(1-\epsilon)^3}+\frac{\opt}{\epsilon M}
	\\
	& \leq \frac{\sum_{e \in X^{t+1}\setminus O'^r}f(e|X_{<e})}{(1-\epsilon)^2}+\sum_{e\in O'^r \cap X^{t+1}} \frac{\E[f(e|X_{<e})]}{(1-\epsilon)^3}+\frac{\opt}{\epsilon M}
	\\
	&	<\E\left( \frac{\sum_{e \in X^{t+1}\setminus O^r}f(e|X_{<e})}{(1-\epsilon)^3}+\sum_{e\in O^r \cap X^{t+1}} \frac{f(e|X_{<e})}{(1-\epsilon)^3}\right) +\frac{\opt}{\epsilon M}
	\\
	&	< \frac{\E[f(X^{t+1})]}{(1-\epsilon)^3}+\frac{\opt}{\epsilon M} \ \ \mbox{(Due to $X^{t+1}=(X^{t+1}\setminus O^r) \cup (O^r\cap X^{t+1})$)}
	\\
	& \leq \frac{\E[f(S)]}{(1-\epsilon)^3}+\frac{\opt}{\epsilon M}.
	\end{align}
	\underline{- If $x_{t+1} \in O'^r$}, $O'^r\cap X^{t+1}=(O'^r\cap X^t)\cup \{x_{t+1}\}$ and $O'^r \setminus (X^t\cup Y^{<x_t})=O'^r \setminus (X^{t+1}\cup Y^{<x_t}) \cup \{x_{t+1}\}$. 
	Therefore
	\begin{align}
	&\sum_{e\in O'^r \setminus (X^{t}\cup Y_{<x_t})} f(e|X^t)+\sum_{e\in O'^r \cap X_{<y_u}} \frac{\E[f(e|X_{<e})]}{(1-\epsilon)^3}
	\\
	&\leq \sum_{e\in O'^r \setminus (X^{t+1}\cup Y_{<x_t})} f(e|X^t)+f(x_{t+1}|X^t)+\sum_{e\in O'^r \cap X^{t}} \frac{\E[f(e|X_{<e})]}{(1-\epsilon)^3}
	\\
	& \leq \sum_{e\in O'^r \setminus (X^{t+1}\cup Y_{<x_t})} f(e|X^t)+\sum_{e\in O'^r \cap X^{t+1}} \frac{\E[f(e|X_{<e})]}{(1-\epsilon)^3} \label{ine:tra1}
	\\
	&< \frac{\sum_{e \in X^{t+1}\setminus O'^r}f(e|X_{<e})}{(1-\epsilon)^2}+\sum_{e\in O'^r \cap X^{t+1}} \frac{\E[f(e|X^{<e})]}{(1-\epsilon)^3}+\frac{\opt}{\epsilon M} \label{ine:tra2}
	\\
	&< \frac{\E[f(X^{t+1})]}{(1-\epsilon)^3}+\frac{\opt}{\epsilon M}
	\\
	&\leq \frac{\E[f(S)]}{(1-\epsilon)^3}+\frac{\opt}{\epsilon M},
	\end{align} 
	where the transform from \eqref{ine:tra1} to \eqref{ine:tra2} is due to applying \eqref{com-2}.
	\item[b)] Bound of  $\sum_{e \in O'^r\setminus ( X_{<y_u}\cup Y^u)}f(e|Y^u)$. Any element $e \in O'^r\setminus Y^u$ was not added into $Y^u$ at the iteration $l(y_u)-2$. By applying Lemma~\ref{lem:cost-ex} with a note that $c(O'^r\setminus ( X_{<y_u}\cup Y^u))\leq B-r\leq c(Y^{u+1})$, we obtain
	\begin{align}
	\sum_{e \in O'^r\setminus ( X_{<y_u}\cup Y^u)}f(e|Y^u)&\leq c(O'^r\setminus ( X_{<y_u}\cup Y^u))\frac{\theta^Y_{(u+1)}}{(1-\epsilon)^2}+\frac{\opt}{\epsilon M}
	\\
	& \leq c(Y^{u+1})\frac{\theta^Y_{(u+1)}}{(1-\epsilon)^2}+\frac{\opt}{\epsilon M}
	\\
	& \leq \frac{f(Y^{u+1})}{(1-\epsilon)^4}+\frac{\opt}{\epsilon M}.
	\end{align}
	Put the above bounds (\textbf{a} and \textbf{b}) into \eqref{ine:com1}, \eqref{ine:com2}, we have
	\begin{align}
	f(O')&\leq f(O'\cup X^t)+f(O'\cup Y^u)
	\\
	&\leq f(X^t\cup \{r\})+f(Y^u \cup \{r\})+\frac{\E[f(S)]}{(1-\epsilon)^3}+ \sum_{e\in Y_{<x_t}\cap O'^r}\frac{f(e|Y_{<e})}{1-\epsilon}+ \frac{f(Y^{u+1})}{(1-\epsilon)^4}+ \frac{4\opt}{\epsilon M}
	\\
	& \leq 2f(S)+\frac{\E[f(S)]}{(1-\epsilon)^3}+ \frac{f(S)}{1-\epsilon}+ \frac{f(S)}{(1-\epsilon)^4}+ \frac{4\opt}{\epsilon M}
	\\
	&< \left( 2+\frac{3}{(1-\epsilon)^4}\right) \E[f(S)]+ \frac{4\opt}{\epsilon M}< \frac{5\E[f(S)]}{(1-\epsilon)^4}+ \epsilon\opt
	\end{align}
	where the last inequality due to $\epsilon M >4/\epsilon$.
	\end{itemize}
	\textbf{Case 2:} If $c(r)\geq (1-\epsilon)B$. Based on Lemma~\ref{lem:r-big}, we consider sub-cases. If one of \textbf{a} or \textbf{c} in Lemma~\ref{lem:r-big} happens, then Lemma~\ref{lem:r-big} holds. If both \textbf{b} and \textbf{d} happen, we have
	\begin{align}
	f(O')&\leq f(O'\cup X')+ f(O'\cup Y') \leq (2+\frac{2}{(1-
	\epsilon)^3})\E[f(S)]+2(2\epsilon+\frac{1}{\epsilon M})\opt
	\end{align}
	which completes the proof.

\end{document}